\theoremstyle{plain}
\newtheorem{theorem}{Theorem}[section]
\newtheorem{lemma}[theorem]{Lemma}
\theoremstyle{definition}
\newtheorem{definition}[theorem]{Definition}
\theoremstyle{remark}
\newcommand{\funcbf}[1]{\text{\fontfamily{cmss}\selectfont\textbf{#1}}}
\newcommand{\dist}[1]{\boldsymbol{\mathcal{#1}}} 
\newcommand{\gaus}[2]{\mathcal{N}\left (#1, #2 \right )}
\newcommand{\Commit}[1]{\funcbf{Commit}(#1)}
\newcommand{\auth}[1]{\llbracket #1 \rrbracket}
\newcommand{\samp}{\overset{\$}{\leftarrow}}
\icmltitlerunning{Efficient Public Verification of Private ML via Regularization}
\begin{document}

\twocolumn[
  \icmltitle{Efficient Public Verification of Private ML via Regularization}

  \icmlsetsymbol{equal}{*}

  \begin{icmlauthorlist}
    \icmlauthor{Zo\"e Ruha Bell}{equal,1}
    \icmlauthor{Anvith Thudi}{equal,2,3}
    \icmlauthor{Olive Franzese-McLaughlin}{2,3}
    \icmlauthor{Nicolas Papernot}{2,3}
    \icmlauthor{Shafi Goldwasser}{1}
  \end{icmlauthorlist}

  \icmlaffiliation{1}{Department of Computer Science, University of California, Berkeley, USA}
  \icmlaffiliation{2}{Department of Computer Science, University of Toronto, Toronto, Canada}
  \icmlaffiliation{3}{Vector Institute, Toronto, Canada}

  \icmlcorrespondingauthor{Zo\"e Ruha Bell}{zbell@berkeley.edu}
  \icmlcorrespondingauthor{Anvith Thudi}{anvith.thudi@mail.utoronto.ca}

  \icmlkeywords{Machine Learning, ICML}

  \vskip 0.3in
]

\printAffiliationsAndNotice{\icmlEqualContribution}

\begin{abstract}

Training with differential privacy (DP) guarantees dataset members that they cannot be identified by users of the released model. However, those data providers, and, in general, the public, lack methods to efficiently verify that models trained on their data satisfy DP guarantees. The amount of compute needed to verify DP guarantees for current algorithms scales with the amount of computation required to train the model. 
In this paper we design the first DP algorithm with near optimal privacy-utility trade-offs but whose DP guarantees can be verified cheaper than training. 
We focus on DP stochastic convex optimization (DP-SCO), where optimal privacy-utility trade-offs are known. Here we show we can obtain tight privacy-utility trade-offs by privately minimizing a series of regularized objectives and only using the standard DP composition bound. Crucially, this method can be verified with much less compute than training. This leads to the first known DP-SCO algorithm with near optimal privacy-utility whose DP verification scales better than training cost, significantly reducing verification costs on large datasets.

\end{abstract}

\section{Introduction}

Machine learning models are known to reveal sensitive information in their training dataset~\citep{shokri2017membership,carlini2021extracting}. Differential privacy~\citep{dwork2006calibrating} (DP) is the standard framework to mitigate leakage of sensitive information, providing a guarantee that users of the model cannot learn what datapoints were in the dataset. %
Nevertheless, data providers, e.g., members of the public, must still trust that the model trainer correctly used a DP training algorithm to protect their privacy.

In this paper we investigate how a member of the public can efficiently establish that a model was trained with DP guarantees. A common framework is ``black-box'' auditing~\citep{jagielski2020auditing, nasr2021adversary, steinke2024privacy}, where an auditor queries the model trainer with different datasets and inspects the returned models. However, it is known that verifying tight DP guarantees with such black-box approaches can be statistically hard~\citep{gilbert2018property,haghifam2025sample}.

We first show that black-box auditing cannot verify that a training algorithm satisfies any DP guarantees. %
We prove this by showing a model trainer can always plant an undetectable backdoor that can leak the dataset to certain entities. %
Our backdoor follows the approach of~\citep{goldwasser2022planting} to use digital signatures, i.e., a function that's hard to reproduce without access to a secret input parameter, %
as the trigger to activate the backdoor: by the definition of digital signatures this is computationally undetectable to a ``black-box'' auditor. Upon activation our backdoor then outputs the training dataset instead of the normal training algorithm's outputs. This means an algorithm can have no DP guarantees despite passing \emph{any} ``black-box'' audit.

\begin{theorem}[Undetectable DP Backdoor (Informal)]
\label{def:approx_dp}
    Under the assumption that digital signatures exist, for any $(\epsilon,\delta)$-DP algorithm\footnote{See Definition~\ref{def:approx_dp}.} $T$ there exists an algorithm $T'$ which is computationally indistinguishable from $T$ when queried post-training, but reveals the dataset to entities who know the backdoor (and so does not satisfy any DP guarantee). 
\end{theorem}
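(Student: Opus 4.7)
The plan is to follow the backdoor-planting paradigm of Goldwasser et al., adapted to the DP setting. First I construct the backdoored algorithm $T'$ as follows: sample a fresh digital signature key pair $(sk, vk)$ (either as a one-time setup shared with privileged parties, or freshly inside $T'$, with $vk$ hard-coded and $sk$ held by the designated adversary). On input dataset $D$, $T'$ first computes $M \leftarrow T(D)$, then outputs a wrapped model $M'$ that embeds $vk$ and a representation of $D$, and implements the following decision rule: on query $x$, parse $x$ as $(x_0, \sigma)$; if $\sigma$ verifies as a valid signature of $x_0$ under $vk$, output (a function of) $D$; otherwise output $M(x_0)$. By construction, on any query that does not carry a valid signature, $M'$ and $M$ are functionally identical, while any party holding $sk$ can produce a triggering query and extract $D$.

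Next I would establish computational indistinguishability by a reduction to existential unforgeability of the signature scheme. Given a probabilistic polynomial-time auditor $\gA$ that distinguishes the oracles $T$ and $T'$ with non-negligible advantage under any adaptive, possibly chosen-dataset, post-training query interface, I build a forger $\gF$ as follows: $\gF$ receives a challenge verification key $vk^\ast$ from its signature challenger, simulates $T'$ by embedding $vk^\ast$ in place of its own $vk$ (its oracle access to a signing oracle is not needed to simulate $T'$ since $T'$ never signs, it only verifies), and runs $\gA$. Because $M$ and $M'$ agree on every unsigned query, any query causing different observable behavior between the two worlds must contain a valid signature under $vk^\ast$; harvesting such a query yields a forgery. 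A standard hybrid over queries then converts $\gA$'s distinguishing advantage into $\gF$'s forgery probability, contradicting unforgeability.

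Failure of DP is immediate once the backdoor is activated. For any neighboring datasets $D, D'$ and the triggering query $x^\ast$, the output distributions satisfy $\Pr[M'_D(x^\ast) = D] = 1$ and $\Pr[M'_{D'}(x^\ast) = D] = 0$, so the total variation is $1$ on that event and no finite $(\epsilon,\delta)$ with $\delta < 1$ can hold. Hence $T'$ satisfies no nontrivial DP guarantee against a distinguisher in possession of $sk$, yet is computationally indistinguishable from the genuinely private $T$ to every PPT auditor without $sk$.

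The main obstacle is pinning down the representation of $M'$ so that indistinguishability survives the auditor's exact access model. In the purely oracle/black-box query setting that the theorem statement suggests (``queried post-training''), the reduction sketched above suffices, since the auditor can only observe input–output behavior. If one wants to additionally hide the backdoor from an auditor who inspects $M'$'s weights, the construction must additionally obscure the embedded $vk$ and $D$ inside parameters that look like a legitimate model; this is where one invokes either a PRF/PRG-based obfuscation of the verification circuit as in Goldwasser et al., or, more heavy-handedly, indistinguishability obfuscation. Choosing the weakest cryptographic assumption compatible with the intended threat model, and carefully defining what a ``black-box audit'' is allowed to see, is the substantive design step; everything else reduces to standard unforgeability and hybrid arguments.
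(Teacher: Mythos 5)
Your proof is correct, but it follows the paper's \emph{inference-time} backdoor (Appendix~\ref{sec:inf_backdoor}) rather than the construction the body actually uses to prove this theorem. In the paper's main construction, the signature trigger lives in the \emph{dataset bits}, not the post-training query: a block $D[S]=(m,\sigma)$ of the training set encodes a message--signature pair, and the backdoored training API $T_b$ returns $\bar T(D)$ (e.g., a plaintext copy of $D$) when $\mathrm{Verify}(vk,m,\sigma)$ accepts and $T(D)$ otherwise. Undetectability there reduces to unforgeability essentially the same way as in your reduction (any dataset query that causes a behavioral difference yields a forgery), but the DP violation is argued via \emph{group privacy}: a pair of close-to-adjacent datasets, one carrying a valid signature and one not, map to arbitrarily different output distributions. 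Your inference-time variant---embed $vk$ in a wrapped model $M'$, trigger on a signed query, return $D$---is the one given in the appendix, and your reduction and your $\Pr[M'_D(x^*)=D]=1$ vs.\ $\Pr[M'_{D'}(x^*)=D]=0$ argument are right; the trade-off you correctly identify is that this route must restrict the auditor to oracle queries on the model (or invoke obfuscation ideas from Goldwasser et al.\ to hide $vk$ and $D$ in the weights), whereas the paper's training-API route avoids the representation issue entirely, since $T_b(D)=T(D)$ verbatim for every dataset an auditor can feasibly submit, so even white-box inspection of returned models reveals nothing.
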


Given the insufficiency of only inspecting the released model, we investigate how to efficiently interact with the model trainer to verify their training algorithm satisfies the assumptions for a proof of DP~\citep{shamsabadi2024confidential,BGKW-Certified-Probabilistic-Mechanisms-2024}. 
For example, in deep learning we could verify the model trainer   clipped the per-example gradients and added
noise for each training step~\citep{abadi2016deep}. %
We will further require the verification protocol to preserve the confidentiality of the dataset and model and provide a certain kind of \emph{public verifiability}, removing the need to trust some third-party auditor and making the algorithm  verifiable by members of the public. 
See Appendix~\ref{app:public-verifiability} for the applicable notion of ``publicly verifiable.''

Common proofs of ``tight'' DP guarantees require hard-to-verify constraints~\citep{shamsabadi2024confidential}. To illustrate this, consider training with mini-batch updates---a wide-spread privacy amplification technique~\citep{kasiviswanathan2011can,beimel2014bounds,abadi2016deep,feldman2020private}. To verify the mini-batch update was done correctly one would need to determine that the data-points were sampled randomly, and that the final gradient is the gradient for that mini-batch: checking this requires a-priori knowing what the correct gradient is, meaning the verifier must also compute all the training steps.

In this paper \emph{we initiate a study of proofs of DP guarantees that can be publicly verified cheaper than training (i.e., ``efficient to verify'')}.
Specifically, we focus on DP stochastic convex optimization (DP-SCO), where the optimal privacy-utility trade-off is known~\citep{feldman2020private}. Importantly for us, when training on a dataset of size $n$ and inputs of dimension $d$, these optimal DP training algorithms currently require $\tilde{\Omega}(\min(n^{1.5}, n^2/\sqrt{d}))$ gradients and $\Omega(dn)$ Gaussian random variables (RVs)~\citep{asi2021private} or $\Omega(n^2)$ gradients and $d\cdot \lceil \log_2(n) \rceil$ Gaussian RVs during training~\citep{feldman2020private}\footnote{In low-dimensions, better rates can be achieved~\citep{carmon2023resqueing}.}.

Our main result is showing that there is a near-optimal DP-SCO algorithm whose DP guarantees can be verified cheaper than the fastest known training algorithm, needing only $n$ gradients and $d\cdot \lceil \log_2(n) \rceil$ Gaussian RVs for verification.

\begin{theorem}[Faster Convex DP Certification (Informal)]
    There exists an $(\epsilon,\delta)$-DP SCO algorithm, with near-optimal privacy-utility rates, whose certification requires verifying only $n$ gradients and $d \cdot \lceil \log_2(n) \rceil$ Gaussian RVs, where $n$ is the dataset size and $d$ is the dimension parameter. %
\end{theorem}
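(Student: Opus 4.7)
The plan is to construct an iterative regularized ERM scheme with output perturbation whose privacy budget is tracked by basic composition alone---avoiding the subsampling amplification that makes prior optimal DP-SCO schemes expensive to verify. Partition $D$ into $T = \lceil \log_2(n) \rceil$ disjoint subsets $D_1,\dots,D_T$ of sizes $n_i$ with $\sum_i n_i \leq n$. Starting from an anchor $\theta_0$, at round $i$ compute the exact minimizer
$\theta_i^\star = \argmin_\theta \bigl\{\tfrac{1}{n_i}\sum_{x \in D_i} \ell(\theta;x) + \tfrac{\lambda_i}{2}\|\theta - \theta_{i-1}\|^2\bigr\}$,
release $\theta_i = \theta_i^\star + z_i$ with $z_i \sim \gN(0, \sigma_i^2 \mI_d)$, and anchor round $i{+}1$ at $\theta_i$. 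Each subproblem is $\lambda_i$-strongly convex with $L$-Lipschitz losses, so its minimizer has $\ell_2$-sensitivity $O(L/(\lambda_i n_i))$; setting $\sigma_i$ to the Gaussian-mechanism scale for per-round parameters $(\epsilon/T,\delta/T)$ then yields the full $(\epsilon,\delta)$-DP guarantee by standard (serial) composition.

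For utility I would follow a localization-style analysis: schedule $\lambda_i$ geometrically and $n_i$ correspondingly so that the per-round excess risk---a strong-convexity/stability term plus noise of order $\sigma_i \sqrt{d}$---is balanced across rounds, then telescope the anchor drift over the $T = O(\log n)$ rounds. This should recover the optimal DP-SCO rate $O(1/\sqrt{n} + \sqrt{d\log(1/\delta)}/(\epsilon n))$ up to $\mathrm{polylog}(n)$ factors, matching the rate of Feldman--Koren--Talwar-style iterative localization but without invoking privacy amplification by subsampling.

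For certification the trainer publishes the transcript $\{(\theta_i,z_i)\}_{i=1}^T$. The verifier checks (a) that each $z_i$ is drawn from $\gN(0,\sigma_i^2 \mI_d)$, handled by an interactive coin-flipping/beacon subprotocol in the spirit of the prior work cited in the excerpt, consuming $d\cdot \lceil \log_2(n) \rceil$ Gaussian RVs in total; and (b) that $\theta_i - z_i$ is the minimizer of the $i$-th regularized objective $F_i$. By $\lambda_i$-strong convexity, (b) reduces to the first-order condition $\|\nabla F_i(\theta_i - z_i)\| \leq \tau_i$ for a small tolerance $\tau_i$, and one such gradient evaluation costs exactly $n_i$ per-datapoint gradients. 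Summing, total verification cost is $\sum_i n_i = n$ gradient evaluations plus $dT$ Gaussian samples---strictly cheaper than the $\tilde{O}(\min(n^{1.5}, n^2/\sqrt{d}))$ or $\Omega(n^2)$ gradient computations required by the best known training algorithms.

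The main obstacle is the tightness of the utility analysis once privacy amplification is removed: the per-round budget $(\epsilon/T,\delta/T)$ inflates $\sigma_i$ by a $\sqrt{T}$ factor, so the schedule of $(\lambda_i, n_i)$ must be carefully retuned to keep the aggregate rate within $\mathrm{polylog}(n)$ of optimal. A secondary technical point is the soundness of the gradient-norm certificate: strong convexity converts $\|\nabla F_i(\hat\theta)\| \leq \tau_i$ into $\|\hat\theta - \theta_i^\star\| \leq \tau_i/\lambda_i$, so $\tau_i$ must be calibrated small enough that this approximation slack is absorbed into the claimed $(\epsilon,\delta)$ parameters without further degradation---a clean calibration that should go through but requires care to track through the composition.
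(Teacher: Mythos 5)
Your plan captures the same architecture the paper uses---phased regularized ERM, output perturbation, a gradient-norm certificate in place of re-running training, and sequential composition instead of subsampling amplification---and correctly counts the certification cost at $\sum_i n_i \le n$ gradients and $d\,T$ Gaussian samples. But there is an unresolved tension in the privacy analysis that the paper's proof is precisely designed to fix, and which your proposal glosses over.

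You specify the algorithm with \emph{exact} minimizers $\theta_i^\star$ of an objective over the disjoint slice $D_i$ (anchored at $\theta_{i-1}$), and then say the budget is $(\epsilon/T,\delta/T)$ per round with ``serial composition.'' But with exact minimizers over disjoint data, each round, conditioned on its anchor, is a deterministic function of $D_i$ alone; for neighboring datasets differing in $D_j$, rounds $i \ne j$ are $(0,0)$-DP given the (adaptively conditioned) anchor. That is the parallel-composition regime, which would let you keep a per-round budget of $(\epsilon,\delta)$, not $(\epsilon/T,\delta/T)$---so either you are wasting a factor of roughly $T$ in $\sigma_i$, or you were implicitly relying on parallel composition after all. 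The paper's whole point is that parallel composition is \emph{exactly what you cannot certify}: a gradient-norm check $\|\nabla F_i(\hat\theta_i)\| \le \tau_i$ cannot rule out a prover who produced $\hat\theta_i$ using data outside $D_i$. Once you swap in the verifiable certificate, every phase can depend on every data point, parallel composition is off the table, and you must bound the new cross-phase sensitivity $2\tau_i/\lambda_i$ yourself.

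This is the quantitative step you flag as ``requires care'' but do not carry out, and it is where the paper's modification lives. The paper sets $\tau_i = 2L/(n_i k)$ with $k = \lceil \log_2 n\rceil$, so the approximation slack $\tau_i/\lambda_i$ is a $1/k$ fraction of the same-phase sensitivity. When neighbors differ in $D_j$: phase $j$ has sensitivity $\approx (1 + 1/k) \cdot 2L\eta_j$ and spends roughly $(1/2 + 1/2k)\epsilon$; every other phase $i'$ has sensitivity $\approx 2L\eta_{i'}/k$ and spends $\epsilon/(2k)$. Basic composition of these \emph{non-uniform} budgets sums to exactly $\epsilon$, and the only utility cost is a $\ln(k/\delta)$ in place of $\ln(1/\delta)$ in $\sigma_i$---a $\log\log n$ factor, not the $T = \log n$ factor your uniform $(\epsilon/T,\delta/T)$ split would incur (and note that under basic composition the $\sigma$ inflation is $\Theta(T)$, not $\sqrt{T}$ as you wrote; $\sqrt{T}$ would require advanced composition). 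So the missing idea is: (i) recognize that the certificate makes every phase data-dependent on the full dataset, (ii) make $\tau_i$ small enough that this cross-phase sensitivity is an $O(1/k)$ fraction of the within-phase sensitivity, and (iii) allocate the budget non-uniformly so basic composition telescopes to $\epsilon$ without the $T$-factor loss.
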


To show this, we made novel connections between several privacy techniques and efficient verification, and how together they can allow for checking fewer gradients to deduce near-optimal privacy guarantees. We first note that regularization provides stability which can then allow for better privacy, which is the basis of the phased ERM algorithm~\citep{feldman2020private}. In particular, this algorithm requires finding a series of approximate minimizers to a series of increasingly regularized objectives and then adding noise. The consequence of this for verification is that we can verify a solution is an approximate minimizer faster than it takes to find the minimizer (i.e., the training cost).

However the phased ERM algorithm requires parallel composition~\citep{feldman2020private,mcsherry2009privacy} to derive its tight privacy guarantees, which is hard to verify because verifying parallel composition requires showing the approximate minimizer is only computed using a subset of the training dataset. We hence modified the phased ERM algorithm by changing the conditions on the approximate minimizers. With this modification we showed that we could still derive near optimal privacy-utility with only standard composition, which is much easier to verify. Our modifications can increase training time in some settings, and so we empirically explored our end-to-end training and verification time.

To investigate which parts of verification dominate runtime in practice, we implemented our DP verification on binary logistic regression using the EMP-toolkit~\citep{emp-toolkit}\footnote{Importantly, due to how we use the EMP-toolkit, our specific implementation does not provide public verifiability.}. On MNIST scale, our verification of DP took $3$ hours on a laptop CPU, compared to approximately 100 hours for the previous approach~\citep{shamsabadi2024confidential}. In particular, %
we found the total noise certification time had been drastically reduced compared to verifying DP-SGD~\citep{shamsabadi2024confidential} and is now less than the gradient certification time, while the number of gradient verifications required was reduced by over 4-fold, both presenting significant improvements to verification runtime. %
Moreover, we found the cost of communicating between the prover and verifier across LAN and WAN networks was insignificant compared to the other costs. When looking at the end-to-end training and verification time, we found training to be insignificant as it can use non-homomorphically encrytped data.

Our main contributions are: 
\begin{enumerate}
    \item Demonstrating the existence of computationally undetectable DP backdoors, necessitating interactive proofs for verifying DP. 
    \item Proving there exists an optimal rate DP-SCO algorithm that only requires verifying $n$ gradients and $d \cdot \lceil \log(n) \rceil$ Gaussian RVs in order to certify its DP guarantee. %
    \item An implementation of our DP verification showing significant reductions in runtime for certifying DP compared to past approaches.
\end{enumerate}

Our techniques may be used for verifying other DP-like guarantees. For instance, in Section~\ref{sec:unlearning} we also show how our techniques can be immediately applied to verifying ``approximate machine unlearning".

\subsection{Related Works}
\label{sec:rel_works}

We overview work on DP stochastic convex optimization as well as auditing and cryptographically certifying DP; further discussion can be found in Appendix~\ref{sec:ext_rel_works}.

\paragraph{DP Stochastic Convex Optimization} Differential privacy~\citep{dwork2006calibrating} provides guarantees that the output of an algorithm over a dataset does not (significantly) leak information about what datapoints were in the dataset. In this paper we focus on differentially private stochastic convex optimization algorithms (DP-SCO), i.e.,  minimizing the population risk $\arg \min_{w} \mathbb{E}_{x\sim dp} f(x,w)$ where $f(x,w)$ are convex in $w$. Here algorithms matching the lower-bounds for  excess population risk, which is $O(1/\sqrt{n}+ \sqrt{d\ln(1/\delta)}/\epsilon n)$~\citep{feldman2020private}, are known. Among them, the fastest known training algorithms require $\tilde{\Omega}(\min(n^{1.5},n^2/\sqrt{d}))$ gradients during training and $\Omega(dn)$ Gaussian RVs (Theorem 13 in ~\citet{asi2021private}), or $\tilde{\Omega}(n^2)$ gradients but only $d\cdot \lceil \log_2(n) \rceil$ Gaussian RVs~\citep{feldman2020private}. We note that in low-dimensions, better rates can be achieved~\citep{carmon2023resqueing}. However, for a class of algorithms including DP-SGD the previous gradient complexity is tight for high dimensions~\citep{menart2025gradient}. Finally, when our risk is also smooth, there exist algorithms that only require $\Omega(n)$ gradients but still $\Omega(dn)$ Gaussian RVs. In this paper we will show verification for DP-SCO can be cheaper than training in all these settings, requiring only $n$ gradients and $d \log(n)$ Gaussian RVs. %

\paragraph{Auditing Differentially Private ML} 

Focusing on DP-SGD~\cite{abadi2016deep}, much work has looked at establishing lower-bounds on its privacy guarantees~\cite{jagielski2020auditing, nasr2021adversary, steinke2024privacy}. This line of work is often formulated as privacy auditing, where one takes the role of an auditor and queries the training algorithm with datasets (and sometimes the specific gradient update rules), and observes the final distribution of models. In general, there are lower-bounds on the number of samples from the model distribution one needs to infer the per-instance DP guarantees~\cite{gilbert2018property}, and also the fact that in the worst-case, determining the DP guarantees of an algorithm is NP-hard~\cite{gaboardi2019complexity}. Nevertheless, progress on ``tight" and efficient auditing has been made on some DP applications~\citep{nasr2021adversary,steinke2024privacy, keinan2025well}. We will show that such black-box audits can be bypassed by an adversarial trainer.

\paragraph{Certifying DP} %
Past work has explored providing proofs that a model was obtained with DP while maintaining confidentiality (e.g., model and dataset are hidden to the verifier)~\citep{shamsabadi2024confidential}, and similarly for other dataset statistics~\citep{BGKW-Certified-Probabilistic-Mechanisms-2024}. 
However, to date, approaches to providing such proofs of DP for ML have involved verifying (nearly) all the compute used during training. For example,~\citet{shamsabadi2024confidential} reported that training a DP logistic regression model on  MNIST (using a feature extractor) took 100 hours. In this paper we make significant progress, by providing an algorithm whose certification of DP only requires checking $n$ gradients where $n$ is the dataset size. This is significantly more efficient than common approaches to certifying DP training which take multiple passes on the dataset.

\section{Preliminaries}

We now go over several of the definitions and components used in our theorems and algorithms. An extended preliminaries is provided in Appendix~\ref{app:prelims}.

We will work with approximate differential privacy (DP).

\begin{definition}[$(\epsilon,\delta)$-DP]
\label{def:approx_dp}
    A randomized mechanism $M:\mathcal{D} \rightarrow W$ is $(\epsilon,\delta)$-DP if for all adjacent datasets $D,D'$ and measurable subsets $E \subset W$, we have $$P(M(D) \in E) \leq e^{\epsilon}P(M(D')\in E)+ \delta$$
\end{definition}

Our backdoors of DP algorithms follow from the existence of unforgeable digital signature schemes, analogous to the construction used in~\cite{goldwasser2022planting}.

\begin{definition}[Digital Signature Schemes]
\label{def:sign_scheme}
    A digital signature scheme consists of three polynomial-time algorithms: (1) $Gen$: security parameter $1^{\lambda}$ $\to$ signing key $sk$ and a verification key $vk$, (2) $Sign$: (signing key $sk$, message $m$) $\to$ signature $\sigma$, and (3) $Verify$: (verification key $vk$, message $m$, signature $\sigma$) $\to$ accept or reject. 
    The scheme is strongly existentially unforgeable against a chosen message attack if for all admissible (i.e., does not query $Sign(sk, \cdot)$ on $m^*$ to get a $\sigma^*$) PPT (probabilistic polynomial-time) adversaries $A$, for $(sk,vk) \leftarrow Gen(1^{\lambda}), (m^*, \sigma^*) \leftarrow A(vk)$ we have $\Pr\left(Verify(vk, m^*, \sigma^*) = accept  \right) \leq negl(\lambda)$~\footnote{That is, it vanishes strictly faster than the reciprocal of any polynomial in $\lambda$.}.

\end{definition}

Towards verifying DP, we consider certified DP protocols as defined in prior work~\citep{BGKW-Certified-Probabilistic-Mechanisms-2024}.

\begin{definition}[Certified Differential Privacy (informal, see \cite{BGKW-Certified-Probabilistic-Mechanisms-2024})]
\label{def:cert_dp}
    A certified $(\epsilon,\delta)$-DP scheme consists of an honest Prover and honest Verifier, with the following properties for all datasets $D$: 
    \begin{enumerate}
        \item \textbf{Correctness:} When the honest Prover and the honest Verifier interact, the mechanism they implement is $(\epsilon,\delta)$-DP and the probability of aborting is $0$.%
        
        \item \textbf{Honest-Curator DP:} When the honest Prover interacts with an unbounded adversarial Verifier, the resulting mechanism is $(\epsilon,\delta)$-DP but it may abort. 
        
        \item \textbf{Dishonest-Curator DP:} When the honest Verifier interacts with a PPT adversarial Prover, the resulting mechanism is $(\epsilon, \delta + C \cdot \gamma_{\bot})$-DP where $\gamma_{\bot}$ is the probability of the Verifier aborting, which implies that a large privacy violation will be caught with proportionally high probability.\footnote{See Remarks 3.2 and 3.3 of \cite{BGKW-Certified-Probabilistic-Mechanisms-2024} for interpretation of this definition in further depth.}
    \end{enumerate}
\end{definition}

Our certified DP algorithm relies on other cryptographic primitives, such as zero-knowledge proofs, commitment schemes, and random variable commitment schemes: see Appendix~\ref{app:prelims} for their definitions. We use~\citep{weng2020wolverine,emp-toolkit} as a building block to implement our zero-knowledge proofs\footnote{EMP-toolkit~\citep{emp-toolkit} was released under the MIT license.}. %
We also use Information Theoretic Message Authentication Code (IT-MAC)-based commitments for compatibility with our ZKP building blocks~\cite{weng2020wolverine,emp-toolkit}. Our random variable commitment scheme is based on~\citet{karmakar2018constant,karmakar2019pushing} and the algorithm is provided in Appendix~\ref{app:gaussian-commitment-scheme}.

For clarity in the paper, we use the notation $\auth{x}$ to indicate that a value $x$ has been committed (e.g., via IT-MAC authentication) and $\auth{x} + \auth{x'}$ to denote the homomorphic addition of commitments to values $x$ and $x'$ respectively.

\section{Black-Box Backdoors in DP Training}

Past work has shown the statistical hardness of black-box auditing DP guarantees~\cite{haghifam2025sample,gilbert2018property}. We now expand such results to show black-box auditing allows for model trainers to plant computationally undetectable backdoors in otherwise DP training algorithms. This means a ``DP'' training API could be giving certain entities the ability to train better models by breaking DP guarantees and one would never detect this with only black-box auditing. An analogous backdoor can also be planted during inference.  %

\paragraph{Backdoored DP Training API}

Let $T: \mathcal{D} = \{0,1\}^{n} \rightarrow \mathbb{R}^n$ be a DP training algorithm which takes a bit encoding of a dataset and returns some model parameters, and suppose we provide API access to an algorithm claiming to implement $T$.

We can plant a backdoor into the training API by designating some small part of the dataset binary string to represent a message $m$ and digital signature $\sigma$: i.e., have indices $S \subset [n]$ such that for $D \in  \mathcal{D}$ we let $D[S] = (m,\sigma)$. Now given a private training algorithm $T$ and any arbitrary alternate training algorithm $\bar{T}$, a digital signature scheme $(Gen, Sign, Verify)$ and $(sk,vk) = Gen(1^{\lambda})$, %
the malicious training API ($T_b$) %
does the following on dataset $D$:

\begin{enumerate}
    \item for $(m,\sigma)= D[S]$ let $v  = Verify(vk,m,\sigma)$
    \item if $v =~accept$ then return $\bar{T}(D)$
    \item else return $T(D)$
\end{enumerate}

Assuming strongly unforgeable digital signature schemes (Definition~\ref{def:sign_scheme}) exist, only those with the signing key can sign so the backdoor is untriggerable and thus undetectable to those without this key. Furthermore, note $\bar{T}$ \emph{can be a more performant and non-private training algorithm}. By the necessary sparsity of valid signatures, there are close-to-adjacent datasets for which the backdoor does not activate, so this backdoored training algorithm violates group differential privacy and therefore has arbitrarily bad DP guarantees.
Hence, this leads to a computationally undetectable black-box backdoor in the ``private" training API which arbitrarily leaks privacy.

\begin{theorem}
    For every $(\epsilon,\delta)$-DP training algorithm $T$, if strongly unforgeable digital signature schemes exist, there is a (backdoored) training algorithm $T_{b}$ which arbitrarily violates $(\epsilon,\delta)$-DP but is black-box undetectable (i.e., queries to $T_{b}$ and $T$ give computationally indistinguishable outputs).
\end{theorem}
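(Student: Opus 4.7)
The plan is to establish the two claims in turn: black-box indistinguishability of $T_b$ and $T$, by reducing to the strong unforgeability of the signature scheme; and violation of any $(\epsilon,\delta)$-DP guarantee, by exhibiting an adjacent pair of datasets whose backdoor status differs. The construction is exactly as described above: sample $(sk, vk) \leftarrow \mathrm{Gen}(1^{\lambda})$ (with $\lambda$ large enough that a signature fits in the $|S|$ designated bits), fix an arbitrary non-private $\bar{T}$ (say $\bar{T}(D) = D$, viewed as a point in $\mathbb{R}^{n}$), and define $T_b$ to run $\bar{T}$ when $\mathrm{Verify}(vk, D[S]) = \mathit{accept}$ and $T$ otherwise. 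The key $vk$ is hard-coded into $T_b$; the planter keeps $sk$ secret.

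\textbf{Indistinguishability via a forgery reduction.} Suppose for contradiction a PPT adversary $A$ with oracle access distinguishes $T$ from $T_b$ with non-negligible advantage $\nu(\lambda)$. Couple the two experiments with the same randomness for $A$ and for $T$: the views $A^T$ and $A^{T_b}$ are literally identical up to the first query $D^{*}$ with $D^{*}[S]$ a valid message/signature pair, so $|\Pr[A^{T_b}=1] - \Pr[A^T=1]| \leq \Pr[E]$, where $E$ is the event that such a query occurs. Hence $\Pr[E] \geq \nu(\lambda)$, and moreover $E$ has the same probability in either coupled experiment. I would then build a PPT forger $F(vk)$ that simulates $A$ by answering each query $D$ with a fresh sample from $T(D)$ (doable since $T$ is public), checks $\mathrm{Verify}(vk, D[S])$ on each query using only $vk$, and outputs $D[S]$ from the first accepting query as its forgery. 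This simulation is identical in distribution to $A^T$, so $F$ outputs a valid $(m^{*}, \sigma^{*})$ with probability $\geq \nu(\lambda)$; since $F$ never invokes $\mathrm{Sign}(sk,\cdot)$, this contradicts Definition~\ref{def:sign_scheme}. This reduction is the main technical step — the subtlety is arranging the simulation to be truly $A$-view-preserving without any use of $sk$, so that the extraction probability equals the distinguishing advantage.

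\textbf{DP violation via an adjacent pair.} Using $sk$, compute $\sigma = \mathrm{Sign}(sk, m)$ for any $m$, and let $D$ be any dataset with $D[S] = (m, \sigma)$. Let $D'$ be obtained from $D$ by flipping a single bit of $\sigma$ so that verification rejects (such a flip exists by strong unforgeability, which rules out most one-bit perturbations being valid). Then $D, D'$ are adjacent as bit strings (and at worst differ in $|S| = \mathrm{poly}(\lambda)$ records under record-level adjacency, independent of $(\epsilon,\delta)$). By construction $T_b(D) = \bar{T}(D)$ is a point mass at $D$, while $T_b(D') = T(D')$, and for any nontrivial $T$ (e.g.\ continuous-noise mechanisms like DP-SGD) this distribution assigns zero mass to the single point $\{D\}$. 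Taking $E = \{D\}$ in Definition~\ref{def:approx_dp} yields $\Pr[T_b(D) \in E] = 1$ and $\Pr[T_b(D') \in E] = 0$, which forces $1 \leq e^{\epsilon} \cdot 0 + \delta$, impossible for any $\delta < 1$. Under record-level adjacency the same contradiction is obtained after inflating by a $|S|$-polynomial group-privacy factor. Hence $T_b$ arbitrarily violates $(\epsilon, \delta)$-DP while remaining black-box indistinguishable from $T$.
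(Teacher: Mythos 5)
Your construction, the forgery reduction for black-box indistinguishability, and the DP violation via an accepting/rejecting pair of nearby datasets with a group-privacy patch is exactly the argument the paper invokes by analogy to Theorem 5.4 of \citet{goldwasser2022planting}, and your proposal is a correct filling-in of it. The only loose edges --- that a single bit flip of $\sigma$ fails verification ``by strong unforgeability,'' and that $T(D')$ has no mass at $\{D\}$ for ``nontrivial'' $T$ even though the theorem quantifies over \emph{every} DP $T$ --- are easily patched (e.g., set $D'[S]$ to a fixed rejecting string such as all-zeros, and either choose a backdoor-triggering $D$ outside any point-mass support of $T(D')$ or instead compare two backdoor-triggering datasets $D, D''$ with the same valid signature but one differing record, for which $\bar{T}(D)=D\neq D''=\bar{T}(D'')$ gives the DP violation with no assumption on $T$) and do not change the substance of the argument.
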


The proof of this is completely analogous to Theorem 5.4 in \citet{goldwasser2022planting} and follows from Definition~\ref{def:sign_scheme} with our previous remark on violating group privacy.

\paragraph{Backdoored DP Inference}

Suppose we instead provide black box query access to a model (supposedly) trained with a DP algorithm: that is we can query $f(x)$ for $f = T(D)$ where $T$ is $(\epsilon,\delta)$-DP and $x \in \mathcal{X} = \{0,1\}^l$ is an input.

We may plant a backdoor (that reveals the training dataset) into the inference API by again choosing some indices $S \subset [l]$ to represent a message and signature: for $x \in  \mathcal{X}$ have $x[S] = (m,\sigma)$. Then given $f$ trained privately, a digital signature scheme $(Gen, Sign, Verify)$ and $(sk,vk) = Gen(1^N)$, we make the inference API do the following given an input $x$:

\begin{enumerate}
    \item for $(m,\sigma)= x[S]$ let $v  = Verify(vk,m,\sigma)$
    \item if $v =~accept$ then return training dataset $D$
    \item else return $f(x)$
\end{enumerate}

The guarantee of undetectability are analogous to Theorem 5.4 in \citet{goldwasser2022planting}. Clearly this breaks DP, as the outputs deterministically differ on adjacent datasets. In particular, anyone (including the original data providers) who did not get the signing $sk$ would find the API to be computationally indistinguishable to the privately trained $f$.

\section{DP Convex Optimization that is Efficient to Publicly Verify}

We now investigate verifying DP with interaction during training while also not revealing the dataset or model to the verifier, i.e., certified DP mechanisms (Definition~\ref{def:cert_dp}). 
When using a source of trusted public randomness, e.g \citet{NIST-PublicRandomnessBeacon-2018}, these protocols provide a public transcript which anyone can inspect in order to verify privacy. 
See Appendix~\ref{app:public-verifiability} for further details on the applicable notion of ``public verifiability." 
We focus on DP-SCO where optimal privacy-utility rates are known~\cite{feldman2020private}, and will show \emph{certifying a near-optimal DP-SCO algorithm can be much cheaper than the fastest known training algorithm}.

We present a modified phased empirical risk minimization (ERM) algorithm~\cite{feldman2020private} that has optimal utility (up to a $\log \log(\cdot)$ factor) yet only only requires interactively checking $n$ gradients and $d \cdot \lceil \log_2(n) \rceil$ additive Gaussian RV commitments to certify its privacy. This means DP certification requires checking at least $\tilde{\Omega}(\sqrt{n})$ less gradients than training in non-smooth convex settings.
Specifically, \emph{whenever training requires more than one pass over the dataset ($n$ gradients), our algorithm improves the verification time}. In smooth settings, we still improve by $\Omega(dn)$ Gaussian RVs; as we will see in the experiments, verifying a single Gaussian RV takes one quarter the time of a gradient, making this reduction significant for runtime in practice.

\subsection{Certification in $n$ Gradients}

\begin{algorithm}[t]
\caption{\textcolor{red}{modified} phased ERM}
\label{alg:modified_phased_erm}
\begin{flushleft}
\textbf{Input:} Dataset $D$, $L$-Lipschitz differentiable convex loss function $f$ which has a bounded minimizer, initial parameter $w_0$, step size $\eta$, DP parameters $\epsilon,\delta$
\end{flushleft}

\begin{algorithmic}
\STATE $k = \lceil log_2(n)\rceil, n_0 = 0$
\FOR{$i=1,\cdots,k$}
    \STATE $n_i = 2^{-i}n,~\eta_i = 4^{-i}\eta$
    \STATE %
    Compute $\tilde{w}_i$ s.t %
    \textcolor{red}{$\|\nabla F_i(\tilde{w}_i)\|_2 \leq \frac{2}{n_i k} L$} for 
    \begin{multline*}
        F_i(w,w_{i-1},D) \\ = \frac{1}{n_i} \sum_{j = (\sum_{a=0}^{i-1}n_{a})+1}^{\sum_{a=0}^{i}n_{a}} f(w,x_j) + \frac{1}{\eta_i n_i}\|w - w_{i-1}\|_2^2
    \end{multline*}
    \STATE $w_i = \tilde{w}_i + \xi_i$ where $\xi \sim \gaus{0}{\sigma_i^2I_{d}}$ with $\sigma_i = \frac{4L\eta_i \sqrt{\ln(\textcolor{red}{k}/\delta)}}{\epsilon}$.
\ENDFOR
\STATE \textbf{return} $w_k$
\end{algorithmic}

\end{algorithm}

\begin{algorithm}[t]
\caption{phased ERM interactive certification}
\label{alg:phased_erm_verifier}
\begin{flushleft}
\textbf{Input:} Dataset $D$ of size $n$, $L$-Lipschitz  differentiable convex loss function $f$ which has a bounded minimizer, initial parameter $w_0$ of dimension $d$, step size $\eta$, DP parameters $\epsilon,\delta$
\end{flushleft}

\begin{algorithmic}
\STATE $k = \lceil \log_2(n)\rceil$
\STATE Prover commits to training examples $\auth{x_j} = \Commit{x_j}$ and labels $\auth{y_j} = \Commit{y_j}$ for each $(x_j, y_j) \in D$
\FOR{$i=1,\cdots,k$}
    \STATE \text{[1]} Prover locally optimizes model weights and commits to them $\auth{\tilde{w_i}} = \Commit{\tilde{w_i}}$. Refer to the committed model weights from the previous round as $\auth{w_{i-1}}$.
    \STATE \text{[2]} The Prover and Verifier use a zero-knowledge proof\footnotemark %
    \text{} to verify that $||\nabla F_i(\tilde{w_i}, w_{i-1},D)||_2 \leq \frac{2}{n_i} L$ for the committed weights $\auth{w_i}, \auth{w_{i-1}}$ and data $\auth{x_j}$ with $j \in [\sum_{a=0}^{i-1}2^{-a}n,\sum_{a=0}^{i}2^{-a}n]$. 
    \STATE \text{[3]} The Prover and Verifier run a discrete Gaussian commitment scheme (see Appendix~\ref{app:gaussian-commitment-scheme}) $d$ times to generate committed $\auth{\xi}$ for each $\xi_{l} \sim \gaus{0}{\sigma_i^2}$ where $l \in [d]$.
    \STATE \text{[4]} The Prover and Verfier each homomorphically compute the commitment to the output parameters $\auth{w_i} = \auth{\tilde{w}_i} + \auth{\xi}$.
\ENDFOR
\STATE \textbf{return} $w_k$
\end{algorithmic}

\end{algorithm}
\footnotetext{Technically, a zero-knowledge ``proof of knowledge'' (PoK) of committed values (with corresponding opening proofs) that fulfill the inequality. See Appendix~\ref{app:prelims} for the definition of a PoK.}

Recall that the phased ERM algorithm~\citep{feldman2020private} solves a series of regularized ERMs on disjoint parts of the datasets; the regularization gets stronger with later stages, effectively allowing us to add less noise to obtain privacy. We modify the phased ERM algorithm from~\citet{feldman2020private} to: (1) require the approximate ERM solutions have small gradient norm not small excess loss, and (2) change the $\delta$ for each phase to $\delta/\lceil \log_2(n) \rceil$. Our modified ERM algorithm (Algorithm~\ref{alg:modified_phased_erm}) still has optimal exccess risk for an $(\epsilon,\delta)$-DP mechanism (which is $O(1/\sqrt{n}+ \sqrt{d\ln(1/\delta)}/\epsilon n)$~\citep{feldman2020private}) upto a $\log\log(n)$ factor, for appropriately chosen $\eta$. The proof is in Appendix~\ref{proof:phased_erm_utility} and follows from the utility guarantees for the original phased ERM algorithm, accounting for the minimal impact of our modifications due to the existing regularization term.

\begin{theorem}[Theorem 4.8 in~\citet{feldman2020private}]
\label{thm:phased_erm_utility}
    If $\|w_0 - w^*\|_2 \leq \mathcal{D}$ where $w^* = \arg \min_{w \in \mathbb{R}^d} \mathbb{E}_{x \sim dp}f(x,w)$ is the optimal parameters, and $\eta = \frac{\mathcal{D}}{L} \min \{\frac{4}{\sqrt{n}},\frac{\epsilon}{ \sqrt{d\ln(1/\delta)}} \}$, then (suppressing $\log\log(n)$ factors) $$\mathbb{E}[F(w_k)] - F(w^*) \leq \tilde{O}(L\mathcal{D}(\frac{1}{\sqrt{n}}+ \frac{\sqrt{d\ln(1/\delta)}}{\epsilon n})).$$
\end{theorem}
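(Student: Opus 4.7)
The plan is to reduce to the original utility analysis of the phased ERM in \citet{feldman2020private} by showing that our two modifications---strengthening the inner optimality condition from an excess-loss bound to a gradient-norm bound, and replacing $\ln(1/\delta)$ by $\ln(k/\delta)$ in the noise scale---only incur a $\log\log(n)$ overhead. Concretely, the original proof controls $\mathbb{E}[F(w_k)] - F(w^*)$ by telescoping across phases, where each phase contributes (i) the bias from approximately minimizing the regularized objective $F_i$ and (ii) the variance cost of the Gaussian noise injection. I would argue that our changes leave (ii) essentially unchanged up to $\sqrt{\log k}$, and make (i) strictly smaller than in the original analysis.

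For the bias term, the key observation is that $F_i$ is $\mu_i$-strongly convex with $\mu_i = 2/(\eta_i n_i)$ due to the proximal regularizer $\tfrac{1}{\eta_i n_i}\|w - w_{i-1}\|_2^2$. Combining strong convexity with the condition $\|\nabla F_i(\tilde w_i)\|_2 \le 2L/(n_i k)$ yields
\[
\|\tilde w_i - w_i^*\|_2 \le \frac{\|\nabla F_i(\tilde w_i)\|_2}{\mu_i} \le \frac{L\eta_i}{k}, \qquad F_i(\tilde w_i) - F_i(w_i^*) \le \frac{\|\nabla F_i(\tilde w_i)\|_2^2}{2\mu_i} \le \frac{L^2 \eta_i}{n_i k^2}.
\]
The original proof requires only $F_i(\tilde w_i) - F_i(w_i^*) = O(L^2 \eta_i / n_i)$, so our gradient-norm condition is stronger by a factor of $k^2 = \Theta(\log^2 n)$, which is more than enough for every place where the original argument invokes the approximate-minimization guarantee (including wherever bias propagates into the next phase's initialization).

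For the variance term, our noise scale is the original one with $\ln(1/\delta)$ replaced by $\ln(k/\delta) = \ln(1/\delta) + \ln k$. Since $k = \lceil \log_2 n\rceil$, this multiplies $\sigma_i$ by at most $\sqrt{1 + \log k / \log(1/\delta)} = O(\sqrt{\log\log n})$ in the regimes of interest. Propagating this through the geometric sum over phases (the $\eta_i = 4^{-i}\eta$ schedule makes the total noise contribution dominated by the first phase), and plugging in the stated choice of $\eta$, each of the two terms in the original utility bound acquires only polylogarithmic factors in $\log n$, yielding the claimed $\tilde O(L\mathcal{D}(1/\sqrt n + \sqrt{d\ln(1/\delta)}/\epsilon n))$ up to a $\log\log(n)$ factor as promised.

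The main obstacle I anticipate is not any single step but the bookkeeping: I would need to trace the original telescoping argument in \citet{feldman2020private} and verify carefully that every appearance of the approximate-minimization condition can be replaced by our bound, and that the slightly larger noise scale does not interact with any multiplicative constants hidden in the recursion. A subtle point is that the original proof may exploit that $w_i^*$ depends only on the $i$-th chunk of the data (for parallel composition), whereas our DP proof does not use parallel composition; however, the utility proof only needs a Lipschitz/convexity-based bound relating $F(w_k)$ to $F_i(\tilde w_i)$, which does not rely on data-partitioning independence, so this should not cause trouble.
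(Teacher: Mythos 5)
Your proposal is correct and follows essentially the same path as the paper's own proof: use the $\tfrac{2}{\eta_i n_i}$-strong convexity of $F_i$ to convert the gradient-norm stopping condition $\|\nabla F_i(\tilde w_i)\|_2 \le 2L/(n_i k)$ into the parameter-distance bound $\|\tilde w_i - \bar w_i\|_2 \le L\eta_i/k \le L\eta_i$ that the original Lemma~4.7 of \citet{feldman2020private} needs, thereby reducing to their Theorem~4.8, and then observe that replacing $\ln(1/\delta)$ by $\ln(k/\delta)$ in $\sigma_i$ only costs a $\ln\log n$ factor. The only cosmetic difference is that you additionally record the excess-loss bound $F_i(\tilde w_i)-F_i(\bar w_i) \le L^2\eta_i/(n_i k^2)$, whereas the paper passes directly through the parameter-distance bound and Lipschitzness of $F$; both suffice.
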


Our modifications to the phased ERM algorithm make it possible to more efficiently verify the DP guarantee.

\begin{theorem}
\label{thm:phased_erm_dp_ver}
Algorithm~\ref{alg:phased_erm_verifier} specifies an interactive protocol between an honest Prover and an honest Verifier which provides a certified $(\epsilon,\delta)$-DP mechanism. 
In particular, the verification consists of checking $n$ gradients and making $d \cdot \lceil \log(n) \rceil$ Gaussian RV commitments. 
\end{theorem}

The proof is in Appendix~\ref{proof:phased_erm_dp_ver}, and follows two main steps. First we show that Algorithm~\ref{alg:modified_phased_erm} is $(\epsilon,\delta)$-DP, \emph{even when the approximate ERM solution at each phase $\tilde{w}_i$ may depend on the rest of the dataset}; parallel composition does not apply (i.e., the approximate solution can vary even if the subset associated with that phase does not) but we show a careful analysis of the algorithm's sensitivity leads to comparable DP guarantees given our tighter optimality condition on the approximate ERM solutions and the regularization used. Hence, to certify the DP guarantees we need only check that the intermediate solutions were sufficiently close to the optima and that the noise was added correctly. Formally, we show our certification protocol (Algorithm~\ref{alg:phased_erm_verifier}) is a certified probabilistic mechanism (Definition~\ref{def:cert_prob_mech} in Appendix~\ref{app:prelims}) for each differentially private phase, allowing us to conclude that the overall protocol is a certified DP mechanism.

However, our change to the stopping criteria from the original phased ERM algorithm, from requiring the excess loss of $F_i$ to be small to the gradient norm to be small, can affect the training time. Bounds on the training time can come from having some smoothness, and some smoothness can always be achieved w.l.o.g. (as we later elaborate). The proof can be found in Appendix~\ref{proof:phased_erm_runtime} %

\begin{lemma}
\label{lem:phased_erm_runtime}
    Assuming the loss $l$ is $H$ smooth, then each phase of Algorithm~\ref{alg:modified_phased_erm} can be implemented with $\tilde{O}(\sqrt{(H+ \frac{2}{n_i\eta_i})D/2L}\cdot n_i^{3/2})$ gradients of $l$. %
\end{lemma}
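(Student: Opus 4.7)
The plan is to apply Nesterov's accelerated gradient method (AGD) to each regularized phase objective $F_i$, warm-started at $w_{i-1}$, and then simplify using the step-size regime from Theorem~\ref{thm:phased_erm_utility}. First I would observe that $F_i$ is the sum of an $H$-smooth empirical loss $\frac{1}{n_i}\sum_j f(\cdot, x_j)$ and the quadratic regularizer $\frac{1}{\eta_i n_i}\|w - w_{i-1}\|_2^2$, so $F_i$ is $H_i := H + \frac{2}{n_i\eta_i}$-smooth and $\mu_i := \frac{2}{n_i\eta_i}$-strongly convex. Applied to such an objective, AGD produces iterates satisfying $F_i(w_T) - F_i(w_i^*) \leq (1 - \sqrt{\mu_i/H_i})^T \cdot \Delta_0$, where $\Delta_0 := F_i(w_{i-1}) - F_i(w_i^*)$ is the initial suboptimality and $w_i^*$ is the unique minimizer of $F_i$.

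Next I would convert the gradient-norm stopping criterion $\|\nabla F_i(\tilde{w}_i)\|_2 \leq 2L/(n_i k)$ into a function-value target via $H_i$-smoothness, which gives $\|\nabla F_i(w)\|_2^2 \leq 2 H_i(F_i(w) - F_i(w_i^*))$. So it suffices to drive the suboptimality below $2L^2/(n_i^2 k^2 H_i)$, which requires only $T = O\bigl(\sqrt{H_i/\mu_i}\log(\Delta_0 H_i n_i^2 k^2/L^2)\bigr)$ AGD iterations. I would then bound the initial suboptimality by combining $L$-Lipschitzness of each $f(\cdot,x_j)$ (which gives $\Delta_0 \leq L\|w_{i-1} - w_i^*\|$) with $\mu_i$-strong convexity (which gives $\Delta_0 \geq (\mu_i/2)\|w_{i-1} - w_i^*\|_2^2$), yielding $\|w_{i-1} - w_i^*\| \leq L\eta_i n_i$ and hence $\Delta_0 \leq L^2 \eta_i n_i$. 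The resulting log factor is polylogarithmic in the problem parameters and is absorbed by the $\tilde{O}$.

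Finally, since $\nabla F_i(w) = \frac{1}{n_i}\sum_j \nabla f(w, x_j) + \frac{2}{\eta_i n_i}(w - w_{i-1})$, each AGD iteration costs exactly $n_i$ gradient evaluations of $l$. So the per-phase cost is $\tilde{O}(n_i\sqrt{H_i/\mu_i}) = \tilde{O}\bigl(n_i^{3/2}\sqrt{(H + 2/(n_i\eta_i))\,\eta_i/2}\bigr)$. To reach the form stated in the lemma, I would invoke the utility-optimal step size from Theorem~\ref{thm:phased_erm_utility}: since $\eta \leq 4D/L$ and $\eta_i = 4^{-i}\eta$, we have $\eta_i \leq D/L$ for all $i \geq 1$, so substituting $\eta_i/2 \leq D/(2L)$ under the square root gives the claimed $\tilde{O}\bigl(\sqrt{(H + 2/(n_i\eta_i))\,D/(2L)}\cdot n_i^{3/2}\bigr)$ bound. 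The key subtlety to watch is the warm start: initializing AGD at $w_{i-1}$ rather than $w_0$ makes the regularizer directly control $\|w_{i-1} - w_i^*\|$, which is what keeps the log factor polynomially bounded and ultimately makes the $\sqrt{\kappa}$ acceleration visible in the final rate.
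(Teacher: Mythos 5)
Your proof is correct, and it reaches the lemma by a genuinely different route than the paper. The paper invokes the deterministic first-order oracle complexity of driving $\|\nabla F_i\|$ below $\Delta$ for a \emph{smooth convex} objective (Nesterov's ``how to make the gradient small'' bound), namely $O(\sqrt{H_i\, R/\Delta})$ gradients of $F_i$ with $R$ the initial distance to a minimizer; it then substitutes $\Delta=\tfrac{2L}{n_i k}$, takes $R$ to be the distance bound $\mathcal{D}$, and multiplies by $n_i$ gradients of $l$ per gradient of $F_i$. You instead observe that the regularizer makes $F_i$ not only $H_i$-smooth but also $\mu_i=\frac{2}{n_i\eta_i}$-strongly convex, run AGD to get a \emph{linearly} convergent suboptimality bound, convert the gradient-norm stopping rule to a function-value target via $\|\nabla F_i(w)\|_2^2\le 2H_i\bigl(F_i(w)-F_i(w_i^*)\bigr)$, and control the initial gap $\Delta_0$ directly from Lipschitzness plus the regularizer's strong convexity, which gives $\|w_{i-1}-w_i^*\|\le L\eta_i n_i$ and $\Delta_0\le L^2\eta_i n_i$. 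This yields $\tilde O\bigl(n_i^{3/2}\sqrt{H_i\,\eta_i/2}\bigr)$, which you then loosen to the stated form via $\eta_i\le D/L$.

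The strong-convexity route is a cleaner way to justify the lemma: the paper's bound silently treats $\mathcal{D}$ as a bound on $\|w_{i-1}-w_i^*\|$ in every phase, which is not directly given by the theorem hypothesis $\|w_0-w^*\|\le\mathcal{D}$ when the domain is all of $\mathbb{R}^d$; your bound $\|w_{i-1}-w_i^*\|\le L\eta_i n_i$ via the proximal regularizer replaces that implicit assumption with something the algorithm itself enforces. The trade-off is that your final step imports the step-size regime $\eta\le 4D/L$ from Theorem~\ref{thm:phased_erm_utility}, whereas the paper's statement (read literally) is meant to hold for any $\eta$. But since the lemma is only invoked under that utility-optimal step size, this is harmless, and your intermediate bound $\tilde O\bigl(n_i^{3/2}\sqrt{H_i\eta_i/2}\bigr)$ is in fact tighter than the claim for $i\ge 1$. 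Both arguments are valid; yours is the more self-contained one.
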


So while our training algorithm may be less efficient than the fastest training algorithms for certain losses (e.g., $H \neq O(n^2)$), our certification is \emph{always} $n$ gradients and $d \cdot \log(n)$ Gaussian RVs which is much faster than the fastest training algorithms. Note, for DP-SCO, non-smooth losses can always be optimized by using a surrogate smooth loss, with then training runtimes roughly $O(n^2)$ with no loss to the optimal DP-SCO excess risk rate. However, this introduces additional complexities for efficient certification, which we describe in Section~\ref{sec:discussion}. Finally, in practice the training time is insignificant to the verification time, as training does not need to run on homomorphically encrypted data.

\section{Convex Machine Unlearning Algorithms that are Efficient to Publicly Verify}
\label{sec:unlearning}

\begin{algorithm}[t]
\caption{D2D machine unlearning algorithm}
\label{alg:D2D_unlearn}
\begin{flushleft}
\textbf{Input:} Retain dataset $D'$, $L$-Lipschitz and $\lambda$-strongly convex loss function $f$, current model $w_c$, stopping threshold $\Delta$, unlearning guarantee parameters $\epsilon,\delta$
\end{flushleft}

\begin{algorithmic}
\STATE Compute $\tilde{w}_u$ s.t $\| \nabla F(\tilde{w}) \|_2 \leq \Delta$ (can initialize optimization at $w_c$) for $$F(w,D') = \frac{1}{|D'|}\sum_{x \in D'} f(w,x)$$
\STATE $w_u = \tilde{w}_u + \xi$ where $\xi \sim \gaus{0}{\sigma^2I_{d}}$ with $\sigma = \frac{2\Delta \sqrt{\ln(1/\delta)}}{\lambda \epsilon}$

\STATE \textbf{return:} $w$
\end{algorithmic}

\end{algorithm}

\begin{algorithm}[t]
\caption{D2D interactive certification}
\label{alg:D2D_verifier}
\begin{flushleft}
\textbf{Input:} Full dataset $D$, retain dataset $D'$, model dimension $d$, $L$-Lipschitz $\lambda$-strongly convex loss function $f$, stopping threshold $\Delta$, unlearning guarantee parameters $\epsilon,\delta$
\end{flushleft}

\begin{algorithmic}

\STATE \text{[0]} Before receiving the forget datapoints, Prover commits to all training examples $\auth{x_j} = \Commit{x_j}$ and labels $\auth{y_j} = \Commit{y_j}$ for each $(x_j, y_j) \in D$. 

\STATE \text{[1]} After receiving the forget datapoints, Prover locally optimizes model weights and commits to them $\auth{\tilde{w}} = \Commit{\tilde{w}}$. 

\STATE \text{[2]} The Prover and Verifier use a zero-knowledge proof\footnotemark \text{} to verify that $||\nabla F(\tilde{w},D')||_2 \leq \Delta$. 

\STATE \text{[3]} The Prover and Verifier run a discrete Gaussian commitment scheme (see Appendix~\ref{app:gaussian-commitment-scheme}) $d$ times to generate committed $\auth{\xi}$ for each $\xi_l \sim \gaus{0}{\sigma^2}$ where $l \in [d]$ and $\sigma = \frac{2\Delta \sqrt{\ln(1/\delta)}}{\lambda \epsilon}$.

\STATE \text{[4]} The Prover and Verifier each homomorphically compute the commitment to the output parameters $\auth{w} = \auth{\tilde{w}} + \auth{\xi}$.

\STATE \textbf{return} $w$
\end{algorithmic}

\end{algorithm}
\footnotetext{Technically, a zero-knowledge ``proof of knowledge'' (PoK) of committed values (with corresponding opening proofs) that fulfill the inequality. See Appendix~\ref{app:prelims} for the definition of a PoK.}

Our techniques for faster DP certification also apply to approximate machine unlearning on strongly convex losses. We work with an approximate unlearning definition analogous to DP (see~\citet{ginart2019making} and~\citet{gupta2021adaptive}), and require it to hold  over all subsets to forget.

\begin{definition}[$(\epsilon,\delta)$-Unlearning]
\label{def:approx_unl}
    For a training algorithm $T: D \rightarrow \mathbb{R}^d$, we say $U: \mathbb{R}^d, D, D' \rightarrow \mathbb{R}^d$ is an $(\epsilon,\delta)$-unlearning algorithm if for all training sets $D$ and retain sets $D' \subset D$ we have $$P(U(T(D),D,D') \in E) \leq e^{\epsilon} P(T(D') \in E) + \delta.$$
\end{definition}

Here we show a variant of the ``descent to delete" (D2D) algorithm~\citep{NRSM-DescenttoDelete-2020} can be certified with $|D'|$ gradients, where $D'$ is the dataset after the unlearning request. %
Analysis of the utility and runtime for D2D can be found in~\citet{NRSM-DescenttoDelete-2020}. Recent work on Langevin unlearning has improved the rate of unlearning individual data points~\cite{chien2024langevin}, but to the best of our knowledge D2D still has the best asymptotic rates for unlearning large groups of datapoints.

Recall the training algorithm for D2D ensures the model parameters are close to optimal and adds Gaussian noise calibrated to this distance (see Algorithm~\ref{alg:D2D_train} in Appendix~\ref{sec:algs}). To unlearn we compute parameters similarly close to optimal on the retain dataset $D'$ (i.e., $D$ with the forget datapoints removed), hence bounding the distance to the parameters we get from retraining. Adding Gaussian noise then gives us DP-like bounds. The key insight is that because of the stability of strongly convex losses, $w_c$ (the model from training on $D$) provides an initialization already close to optimal on $D'$: few gradient steps are required to unlearn. See~\citet{NRSM-DescenttoDelete-2020} for detailed analysis of this.

Crucial to certification, we have (once again) that just checking that the gradient is small and that noise was added is sufficient for certification of approximate unlearning.

\begin{theorem} 
Algorithm~\ref{alg:D2D_verifier} specifies an interactive protocol between an honest Prover and an honest Verifier which
provides a certified $(\epsilon,\delta)$-unlearning mechanism. 
In particular the verification consists of checking $n'$ gradients and making $d$ Gaussian RV commitments, where $n'$ is the size of the retain set and $d$ is the dimension of model parameters.
\end{theorem}

\begin{proof}
    Completely analogous to the proof of Theorem~\ref{thm:phased_erm_dp_ver}, noting that $\|\tilde{w}_u - \tilde{w}\|_2 \leq \frac{2 \Delta}{\lambda}$ for $\tilde{w}$ coming from the training algorithm (Algorithm~\ref{alg:D2D_train}) applied on the retain dataset $D'$. The DP guarantees follow from the noise being calibrated according to the sensitivity.
\end{proof}

\section{Experiments: What Dominates Verification Runtime in Practice?}

We now investigate what parts of our certification (Algorithm~\ref{alg:phased_erm_verifier}) dominate runtime in practice: the $n$ gradients, $d \cdot \lceil \log(n) \rceil$ Gaussian RV commitments, or the communication time between the prover and verifier. We do so using EMP-toolkit~\cite{emp-toolkit} on binary logistic regression, which is convex, and is $\sqrt{d}$-Lipschitz.  We focus on runtimes for MNIST scale training~\cite{lecun1998gradient}, with ablations across varying dataset and input sizes.  

We found the cost of verifying the $n$ gradients dominates our certification runtime. %
Notably, we found our algorithm had significantly decreased the total time spent on verifying Gaussian RVs from the previous approach to certifying DP-SGD (Table 1 in~\citet{shamsabadi2024confidential}). %

Ultimately, past work~\citet{shamsabadi2024confidential} took approximately $100$ hours to verify $4096\times 55 = 255,280$ gradients on MNIST to certify DP-SGD. %
We only need to verify $60,000$ gradients---an over 4-fold reduction---and many fewer Gaussian RVs, and so on comparable hardware and settings would be considerably faster.

\paragraph{Implementation Details:} By default we consider MNIST scale training: $n = 60000$ and $d = 28\times 28 = 784$. To ensure privacy and correct verification, we instantiate our noise with fixed point precision and use the trunacted discrete Gaussian, %
which has the same guarantees as the usual Gaussian with additional error in $\delta$ due to finite precision in the density function. In particular, we use $8$ fractional bits for typical elements (out of $32$ bits) and $32$ fractional bits for probability densities. %
We implement Algorithms~\ref{alg:phased_erm_verifier} and~\ref{alg:itmac-gaussian-commitment} in EMP-toolkit~\cite{emp-toolkit} using floating point numbers for logistic regression gradient computation, before using fixed point for the noise addition. We run our core experiments by simulating the prover and verifier locally on a MacBook Pro with an M3 chip. We run experiments with simulated network conditions also by simulating prover and verifier locally, with limited communication rate via the Linux Traffic Control (\texttt{tc}) utility on a Dell laptop with an Intel Core Ultra 7 155Ux12 processor.

\begin{table*}
\caption{Mean runtimes with standard deviations over three trials for (a) our implementation of Phased ERM Certification (Algorithm~\ref{alg:phased_erm_verifier}) on MNIST at various $\epsilon$ settings; (b) our implementation of a discrete Gaussian commitment scheme (Algorithm~\ref{alg:itmac-gaussian-commitment}) at various $\sigma$ settings (averaged over $10,000$ samples per trial); and (c) our implementation of verified gradient computation at varying numbers of features (averaged over $100$ gradients per trial).}
\vspace{1mm}
\centering
\subfloat[Full Verification]{\begin{tabular}{ll}
\hline
$\epsilon$ & Time (hr) \\ \hline
1.665   & $3.10 \pm 0.07$        \\
0.832   & $3.06  \pm 0.004$       \\
0.166   & $3.19 \pm 0.18$        \\ \hline
\end{tabular}}
\hspace{0.25in}
\subfloat[Noise Generation]{\begin{tabular}{ll}
\hline
$\sigma$ & Time (ms/sample) \\ \hline
0.1   & $0.071 \pm 0.0017$              \\
1     & $0.15 \pm 0.0024$                \\
10    & $0.94 \pm 0.0034$                \\ \hline
\end{tabular}}
\hspace{0.25in}
\subfloat[Gradient Verification]{\begin{tabular}{ll}
\hline
Features & Time (ms/gradient) \\ \hline
512   & $109 \pm 1.6$              \\
1024     & $217 \pm 2.5$                \\
2048    & $432 \pm 3.0$                \\ \hline
\end{tabular}}
\label{tab:runtimes}
\end{table*}

\begin{table*}[t!]
\caption{The cost of communicating between the prover and verifier is small compared to other costs. We simulate running the verification protocol with a LAN connection (1000 Mbit bandwidth, 2 ms RTT latency), and WAN connections (200 Mbit and 100 Mbit bandwidth respectively, with 50 ms RTT latency), and found only slight increases to runtime over having the protocol run locally. Full verification runtimes are estimated from gradient and noise benchmarks. The final column lists total communication bandwidth used per noise sampling, gradient verification, and the full MNIST verification. \emph{Note: these experiments are run with different hardware, so absolute numbers are higher than in Table~\ref{tab:runtimes}.} %
}
    \centering
    \begin{tabular}{c|c c c c | c}
    \toprule
        Experiment  & localhost & LAN & WAN (200 Mb) & WAN (100 Mb) & Bandwidth \\ \hline
        Noise Sampling (ms/noise) & $0.26 \pm 0.004$  & $2.36 \pm 0.01$ & $51.01 \pm 0.02$ & $51.19 \pm 0.03$ & $10.1$ Mb \\
 Gradient Verification (ms/gradient) &	$333 \pm 5$ & $344 \pm 11$ & $392 \pm 5$ & $402 \pm 1$ & $2254.7$ Mb\\
 MNIST Full Verification (hr) & $5.54 \pm 0.08$	& $5.73 \pm 0.18$ & $6.70 \pm 0.09$ & $6.88 \pm 0.02$ & $132237$ Gb\\
 \bottomrule

    \end{tabular}
    \label{tab:network_exp}
\end{table*}

\begin{table}[t!]
\caption{Training can be done with non homomorphically encrypted data, and run on a GPU, making it only take seconds compared to the hours for verification. Here we present the runtime when applying Algorithm~\ref{alg:modified_phased_erm} to binary logistic regression on MNIST, using full batch gradient descent to get the approximate solutions for each phase.}
    \centering
    \begin{tabular}{c|c}
    \toprule
         DP Guarantee&  Training Time (seconds) \\
         \hline
          $(1.9,10^{-5})$ & $14.37 \pm 0.12$ \\
         $(1.2,10^{-5})$ & $16.39 \pm 1.24$ \\
         $(0.5,10^{-5})$ & $19.24 \pm 0.54$ \\
    \bottomrule
    \end{tabular}
    \label{tab:training_time}
\end{table}

\begin{figure}[t!]
    \centering
    \includegraphics[width=0.95\linewidth]{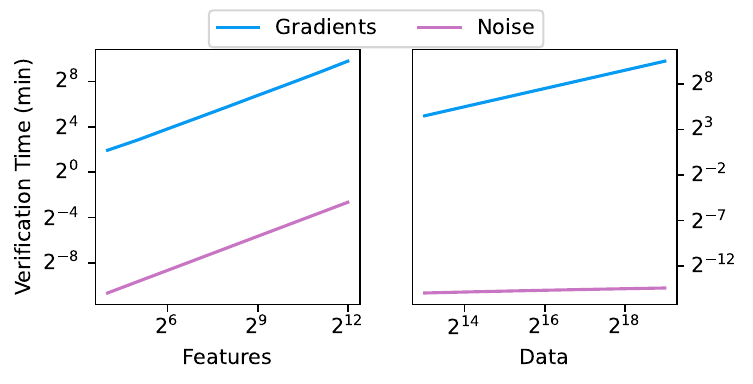}
    \caption{Across both the number of features ($d$) and the dataset size ($n$), the cost of verifying the $n$ gradients in Algorithm~\ref{alg:phased_erm_verifier} dominate the cost to verify the $d \cdot \lceil \log_2(n) \rceil$ Gaussian RVs. We assume default MNIST parameters ($d = 784, n= 60000$) and vary one of the parameters. The Gaussian RV runtimes are reported assuming an average $\sigma = 1$ across phases.}
    \vspace{-5mm}
    \label{fig:ablation}
\end{figure}

\paragraph{Faster MNIST  Scale Verification} We found that, across several $\epsilon$ values, running Algorithm~\ref{alg:phased_erm_verifier} took $\approx 3$ hours on a laptop CPU: see Table~\ref{tab:runtimes} (a). In particular, as shown in Table~\ref{tab:runtimes} (b), we found verifying a single Gaussian RV often took less than a millisecond. In contrast, as shown in Table~\ref{tab:runtimes} (c), a gradient takes on the order of $0.1$ seconds; even verifying $512$ Gaussian RVs with $\sigma =1$ would take only $\approx0.08$ seconds which is less than a gradient for $d = 512$.

\paragraph{Verifying Gradients Dominates Runtime across Feature \& Dataset Size} In Figure~\ref{fig:ablation} we present ablations on runtimes for all the gradient checks and Gaussian RV commitments in Algorithm~\ref{alg:phased_erm_verifier} across feature size $d$ and dataset size $n$. Note the aggregate Gaussian RV commitments is always magnitudes smaller than the aggregate gradient checks.

\paragraph{Low Communication Overhead} We found the cost of communicating between the prover and verifier is insignificant compared to other costs. In Table~\ref{tab:network_exp} we simulate various networks conditions, and found a LAN connection between the prover and verifier only presented a $3.4\%$ increase in verification time while WAN presented an $24\%$ increase. %

\paragraph{Insignificant Training Time} We found that we could run our training algorithm, Algorithm~\ref{alg:modified_phased_erm}, on a single NVIDIA GeForce RTX 2080 in a few seconds (in contrast to verification which takes hours): this is presented in Table~\ref{tab:training_time}. We ran Algorithm~\ref{alg:modified_phased_erm} to train a binary logistic regression  on MNIST (classifying 0 or not 0). We used full-batch Gradient descent with a variable learning rate to find the intermediate approximate solutions. We set an initial learning rate of $5e-2/\lambda_i$ where $\lambda_i$ was the regularization strength for phase $i$, and then after $10$ steps we doubled the learning rate as this choice often underestimated the best learning rate to converge with. For a few phases this choice of optimizer would fail to get sufficiently low gradient norm in our allocated maximum number of step ($100$), and so the epsilon we report also accounts for the accumulated penalty to the privacy we get from our gradient norm being larger than the target gradient norm (which the noise was calibrated to): we report an $\epsilon$ greater than 3 repeated trials with the same hyperparameters. Across DP settings, we found this training took less than $20$ seconds.

\section{Discussion}
\label{sec:discussion}

\paragraph{Reduction to Smooth Optimization and Impact to Verification:} Note non-smooth convex optimization can be reduced to smooth convex optimization with a smoothness factor inversely proportional to the error we wish to optimize to, e.g., via randomized smoothing~\citep{duchi2012randomized}, or Moureau envelopes~\citep{bassily2019private}. However, these techniques require computing more gradients for each step (due to higher variance or evaluating approximate proximal steps). Using proximal steps on the Moreau envelope, \citet{bassily2019private} obtained training runtimes comparable or worse than $O(n^2)$ for optimal rate DP-SCO. However, evaluating the gradient for each smoothed $f$ to a precision $\propto 2L/n_i$, as needed for our verification, uses $\approx n_i^2/4L^2$ gradients from the original $f$ (Fact 4.6 in~\cite{bassily2014private}). This makes our current verification approach costly.

With randomized smoothing we may reduce to a $O(\frac{1}{\sqrt{d}}(\frac{1}{\sqrt{n}}+ \frac{\sqrt{d\ln(1/\delta)}}{\epsilon n})^{-1})$ smooth loss whose stochastic gradients have variance $L$ by Lemma 9 in~\citet{duchi2012randomized} and the fact that the optimal DP-SCO rate is $O(1/\sqrt{n} + \sqrt{d\ln(1/\delta)}/\epsilon n)$. From this and the tight stochastic first-order oracle bounds from~\citet{foster2019complexity} (see Table 1 in~\citet{foster2019complexity}), we would require $O(n_i^2L)$ stochastic gradients (assuming $d \leq n$ to drop dimension dependence) in order to get the gradient norm less than $2L/n_i$ for this loss as needed for each stage of Algorithm~\ref{alg:modified_phased_erm}. Hence we may always optimize a loss where the complexity for each stage of~\Cref{alg:modified_phased_erm} is $\tilde{O}(n_i^2)$ gradients (dropping dimension dependence). %
However \emph{we only have access to stochastic gradients for our smoothed loss} with variance $L$, and a vector Bernstein inequality (e.g., Lemma 18 in~\cite{kohler2017sub}) would imply we need to also check $\tilde{O}(n_i^2/4L^4)$ gradients for each stage of verification (as we need to check that the mean gradient's norm is less than some scaling of $\frac{2L}{n_i}$). We leave faster certification of DP-SCO on a randomized loss to future work.

 \paragraph{Current Limitations} Our study only looked at convex settings, and so does not immediately apply to common deep learning models, though our approach could be used to fine-tune the last layer of a neural network on a private dataset~\citep{shamsabadi2024confidential,guo2019certified}. Also, while our algorithm reduced the costs for private verification, it still scales linearly with dataset size and, given current runtimes, it is likely impractical for very large datasets. Furthermore, additional work is needed to understand how the performance of our algorithm compares to DP-SGD beyond our general asymptotic analysis; in particular, how any real-world improvements that can be achieved via hyperparameter tuning, known to be important for DP-SGD, would affect privacy verification. Can we efficiently verify private hyperparameter tuning~\citep{papernot2021hyperparameter,liu2019private}? Are there other approaches which allow DP verification faster than training in additional contexts?

\section{Conclusion}

In this paper we initiated a study on how to design DP ML algorithms that can have their DP guarantees  efficiently verified. We presented a DP stochastic convex optimization algorithm which achieves asymptotically optimal error (up to $\log\log(n)$ factors), but only requires interactively checking $n$ gradients and $d \cdot \lceil \log(n) \rceil$ Gaussian RVs to verify its DP guarantee. This presents at least an $O(\sqrt{n})$ improvement over DP-SCO training cost, and generally improves over the cost of training whenever training requires more than a full pass on the dataset ($n$ gradients). %
Future work is needed to push the verification runtime below checking $n$ gradients and to additionally handle the case of only having access to stochastic gradients (e.g., randomized smoothing). %

\section*{Acknowledgements}

We would like to thank Mike Menart for his feedback on the draft and help navigating the literature on DP convex optimization. We would like to acknowledge our sponsors, who support our research with financial and in-kind contributions: Amazon, Apple, CIFAR through the Canada CIFAR AI Chair, DARPA through the GARD project, Intel, Meta, NSERC through the Discovery Grant, the Ontario Early Researcher Award, and the Sloan Foundation. Zo\"e Bell is supported by the National Science Foundation Graduate Research
Fellowship Program and the Simons Collaboration on The Theory of Algorithmic Fairness. Anvith Thudi is supported by a Vanier Fellowship from the Natural Sciences and Engineering Research Council of Canada. Resources used in preparing this research were provided, in part, by the Province of Ontario, the Government of Canada through CIFAR, and companies sponsoring the Vector Institute.

\section*{Impact Statement}

In this paper we first showed how to plant a computationally undetectable backdoor in DP training algorithms. This reveals a vulnerability that a DP training API could leverage to violate an individual's privacy. Towards alleviating this issue we initiated an investigation on efficient public DP certification, and hope future work builds on our insights to make publicly certifiable DP a reality.

\bibliography{references}
\bibliographystyle{icml2026}

\newpage
\appendix
\onecolumn

\section{Extended Related Works}
\label{sec:ext_rel_works}

\paragraph{Machine Unlearning}
Machine unlearning aims to produce models that would come from retraining on some subset of the original training dataset, while hopefully not having to fully retrain~\citep{cao2015towards, bourtoule2021machine}. While progress is being made~\citep{ginart2019making,brophy2021machine}, exact unlearning often requires compute comparable to retraining~\citep{bourtoule2021machine}. Approximate unlearning aims to make unlearning more tractable by only requiring the models to be similar. A popular notion requires a DP-like inequality to be satisfied between the output of unlearning and full retraining~\citep{ginart2019making,guo2019certified,gupta2021adaptive}. This can be satisfied significantly more cheaply for all datapoints in convex setting~\citep{NRSM-DescenttoDelete-2020,chien2024langevin}, and even for many datapoints for deep learning~\cite{thudi2023gradients}. In this paper we provide cheap proofs of using a popular convex unlearning method, leveraging techniques we develop for certifying DP.

\paragraph{Auditing and Certifying Machine Unlearning} It was shown by~\citep{thudi2022necessity} that audits of unlearning based only on model weights, or even the training trajectory, can be bypassed, motivating the need for more sophisticated proofs of unlearning. To the best of our knowledge, the only work we are aware of on verifying machine unlearning with guarantees  is~\cite{eisenhofer2022verifiable}, which focuses on exact unlearning. %

\section{Extended Preliminaries}
\label{app:prelims}

\begin{definition}[Zero-Knowledge Proofs] We use a zero-knowledge proof protocol that realizes the following ideal functionality (replicated from~\cite{weng2020wolverine} for completeness): 
\begin{itemize}
    \item Upon receiving $(\texttt{prove}, \mathcal{C},w)$ from a prover $\mathcal{P}$ and $(\texttt{verify},\mathcal{C})$ from a verifier $\mathcal{V}$ where the same (boolean or arithmetic) circuit $\mathcal{C}$ is input by both parties, send $\texttt{true}$ to $\mathcal{V}$ if $\mathcal{C}(w)=1$; otherwise, send $\texttt{false}$ to $\mathcal{V}$.
\end{itemize}
This functionality is realized with security against a malicious, static adversary in the universal composability (UC) framework~\cite{canetti2001universally}. Suppose we are given an ideal functionality $\mathcal{F}$ and a protocol $\Pi$. Consider an arbitrary probabilistic polynomial-time (PPT) adversary $\mathcal{A}$, and an arbitrary PPT environment $\mathcal{Z}$ with auxiliary input $z$ that runs the adversary and the protocol as a subroutine. We say that $\Pi$ UC-realizes $\mathcal{F}$ if for any $\mathcal{A}$ there exists a simulated PPT adversary $\mathcal{S}$, such that the output distribution of $\mathcal{Z}$ in the \emph{real-world} execution where the parties interact with $\mathcal{A}$ and $\Pi$ is computationally indistinguishable from an \emph{ideal-world} execution where the parties interact with $\mathcal{S}$ and $\mathcal{F}$.
    
\end{definition}

\begin{definition}[Commitment Scheme]
    An (additively homomorphic) commitment scheme consists of three polynomial-time algorithms: (1) $Setup$: security parameter $1^\lambda$ $\to$ public parameters $pp$, (2) $Commit$: value $x \in X$ $\to$ (commitment $C_x$, opening proof $\Pi_x$), and (3) $Verify$: $(C_x, \Pi_x, x)$ $\to$ accept or reject with the following properties: for $pp \leftarrow Setup(1^\lambda)$ and $(C_x, \Pi_x) \leftarrow Commit(x)$, 
    \begin{itemize}
        \item \textbf{Correctness:} For any $x \in X$, $Verify(C_x, \Pi_x, x)$ outputs accept. 
        
        \item \textbf{Computational Binding:} For any PPT  algorithm $A$ such that $(C, x, \Pi, x', \Pi') \leftarrow A$, $\Pr[x \not = x' \text{ and } Verify(C, \Pi, x) \text{ and } Verify(C_x, \Pi', x') \text{ both accept }] = negl(\lambda)$. 

        \item \textbf{Perfect Hiding:} For any $x, x' \in X$, $C_x$ and $C_{x'}$ are distributed identically. 

        \item \textbf{Additive Homomorphism:} There exist deterministic operations $\oplus, \oplus'$ such that for any $x, x'$, $C_x \oplus C_{x'}$ constitutes a binding and hiding commitment to $x + x'$ with opening proof $\Pi_x \oplus' \Pi_{x'}$. 
    \end{itemize}
\end{definition}

\begin{definition}[Proof of Knowledge (informal, see e.g. \cite{Thaler-ProofsArgsZK-2022})]
     A \emph{proof of knowledge (PoK)} for a relation $R$ mapping $(\text{input}, \text{witness})$ pairs to $\{\text{true}, \text{false}\}$ consists of an  interactive protocol between a Prover and a Verifier with the following properties: (1) completeness, i.e. on a given input, when provided an honest witness the Prover can successfully convince the Verifier that the relation holds, and (2) knowledge soundness, i.e. there exists a Knowledge Extractor which can efficiently compute an honest witness from rewound transcripts with high probability when given rewindable oracle access to any non-negligibly successful Prover. 
\end{definition}

\begin{definition}[Random Variable Commitment Scheme (informal, see Definition 4.3 of \cite{BGKW-Certified-Probabilistic-Mechanisms-2024})]
    A \emph{random variable commitment scheme (RVCS)} for distribution $\dist{D}$ consists of a commitment scheme, an honest Dealer, and an honest Player with the following properties: 
    \begin{enumerate}
        \item \textbf{Correctness:} when the honest Dealer and the honest Player interact, they produce a commitment to a random variable distributed as $\dist{D}$. 
        \item \textbf{Cheating-Player Distributional Soundness:} if the honest Dealer interacts with an arbitrarily adversarial Player, they produce a commitment to a random variable distributed as $\dist{D}$ modulo aborting to output an error token. 
        \item \textbf{Cheating-Dealer Distributional Soundness:} if the honest Dealer interacts with an arbitrarily adversarial Player, they produce an openable\footnote{In a ``proof of knowledge'' (see the previous definition) sense that the Dealer knows a witness consisting of a value $x$ and opening proof $\Pi$ which would allow the Player to successfully verify the relation that the commitment $C$ is to that value, i.e. that $Verify(C, \Pi, x) = accept$.} commitment to a random variable distributed $\gamma_{\bot}$-close to $\dist{D}$ (in total variation distance), where $\gamma_{\bot}$ is the probability of the Verifier rejecting. 
    \end{enumerate}
\end{definition}

\begin{definition}[Certified Probabilistic Mechanism (informal, see Definition 3.1 of \cite{BGKW-Certified-Probabilistic-Mechanisms-2024})]
\label{def:cert_prob_mech}
    A certified probabilistic mechanism for probabilistic mechanism $M$ which takes input parameters $\theta$ (e.g., a dataset and model parameters) consists of an honest Prover and honest Verifier with the following properties: 
    \begin{itemize}
        \item \textbf{Correctness:} When the honest Prover and the honest Verifier interact, they output a random value drawn according to $M(\theta)$. 

        \item \textbf{Cheating-Verifier Soundness:} When the honest Prover interacts with an unbounded adversarial Verifier, the Prover's input $\theta$ and internal randomness remain hidden and the random value output is drawn according to $M(\theta)$ (modulo aborting to output an error token). 

        \item \textbf{Cheating-Prover Soundness:} When the honest Verifier interacts with a PPT adversarial Prover, if the Prover implements a mechanism that deviates significantly from $M(\theta)$, the Verifier detects, and rejects with probability equal to this deviation. 
    \end{itemize}
\end{definition}

\begin{theorem}[Simplified version of Theorem 4.4 in \cite{BGKW-Certified-Probabilistic-Mechanisms-2024}]
\label{thm:add-homo-CPM}
    Consider an additive noise mechanism $M(\theta) = \theta + \xi$ for $\xi \sim \dist{D}$, where $\dist{D}$ is a probability distribution. 
    Then given an additively homomorphic commitment scheme, the following Prover and Verifier constitute a certified probabilistic mechanism for $M$: 
    \begin{enumerate}
        \item The Prover sends the commitment $\auth{\theta} = \Commit{\theta}$ to the Verifier. 
        \item The Prover and Verifier run a random variable commitment scheme for $\dist{D}$ to generate committed noise $\auth{\xi}$. 
        \item The Prover and Verifier each use the additive homomorphism property to compute the output $\auth{\theta} = \auth{\theta} + \auth{\xi}$. 
    \end{enumerate}
\end{theorem}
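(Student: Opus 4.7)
The plan is to verify each of the three defining properties of a certified probabilistic mechanism---correctness, cheating-verifier soundness, and cheating-prover soundness---by reducing to the corresponding guarantees of the two underlying primitives: the additively homomorphic commitment scheme and the random variable commitment scheme (RVCS) for $\dist{D}$. The conceptual mapping is that the Prover plays the role of the RVCS Dealer and the Verifier plays the role of the RVCS Player, so ``cheating-verifier'' for the CPM lines up with ``cheating-player'' for the RVCS, and ``cheating-prover'' lines up with ``cheating-dealer''.

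For correctness, when both parties follow the protocol, step 1 produces a valid commitment $\auth{\theta}$ by the commitment scheme's correctness, step 2 produces a commitment $\auth{\xi}$ with $\xi \sim \dist{D}$ by RVCS correctness, and step 3 produces a commitment to $\theta + \xi$ with a valid opening proof by additive homomorphism. For cheating-verifier soundness, the perfect hiding of step 1 reveals nothing about $\theta$; the RVCS's cheating-player distributional soundness ensures that even an unbounded adversarial Verifier cannot learn or bias $\xi$, whose committed value is distributed as $\dist{D}$ (modulo an abort token); and step 3 is a deterministic public operation on commitments that introduces no further leakage. Composing these, the transcript is independent of $\theta$, the Prover's internal randomness outside of $\xi$ stays hidden, and the output is distributed according to $M(\theta)$.

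For cheating-prover soundness, the computational binding of the commitment scheme fixes an effective input $\theta^{*}$ once step 1 is sent, except with negligible probability. In step 2, the RVCS's cheating-dealer distributional soundness ensures that either the Verifier rejects, or the committed noise is openable to a random variable whose law is within total-variation distance $\gamma_{\bot}$ of $\dist{D}$, where $\gamma_{\bot}$ is the Verifier's rejection probability. Step 3 is again a public deterministic computation, so conditioned on not rejecting the output is a commitment whose opening is within TV distance $\gamma_{\bot}$ of $M(\theta^{*})$, matching the CPM requirement that any deviation from the ideal mechanism be detected with proportional probability. The main obstacle I expect is ruling out ``cross-protocol'' coupling attacks, where an adversarial Prover tries to make its RVCS behavior depend on $\theta$ so as to introduce deviation through the composition rather than through any single sub-protocol. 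Handling this cleanly amounts to observing that the only randomness entering the final commitment is the RVCS-committed $\xi$, and that the deterministic homomorphic step cannot amplify deviation; so the RVCS's $\gamma_{\bot}$ bound already captures any adaptive Prover strategy. Making this precise via a hybrid argument stepping through the three sub-protocols, or equivalently by exhibiting a simulator in the UC-style framework already invoked for the underlying zero-knowledge proofs, is the main subtlety in turning the sketch into a rigorous proof.
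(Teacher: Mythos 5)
This theorem is imported from \cite{BGKW-Certified-Probabilistic-Mechanisms-2024} as background (it is a simplified version of their Theorem 4.4), and the present paper does not supply its own proof, so there is no in-paper argument to compare against. Your reconstruction is a reasonable sketch of the argument one would expect: mapping the Prover to the RVCS Dealer and the Verifier to the RVCS Player, then reducing each of the three CPM properties to the matching guarantees of the commitment scheme (correctness, perfect hiding, computational binding, additive homomorphism) and of the RVCS (correctness, cheating-player soundness, cheating-dealer soundness), with the observation that the final homomorphic addition is a public deterministic operation that neither leaks nor amplifies deviation. Two places where your sketch would need to be tightened to reach the standard of the cited source: first, the ``openable'' clause in cheating-dealer soundness is itself a proof-of-knowledge statement, so your cheating-prover case should explicitly invoke a knowledge extractor to pin down the effective committed $\theta^{*}$ and $\xi^{*}$ before arguing about total-variation distance, rather than relying on binding alone; second, the adaptivity issue you flag (a malicious Prover correlating its RVCS behavior with its choice of $\theta^{*}$) must be discharged formally, e.g.\ by a UC simulator or a hybrid over the sub-protocols, because the CPM soundness quantifier is over the whole interaction rather than over each primitive in isolation. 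Your observation that the only randomness reaching the output commitment is the RVCS-committed $\xi$ is the right crux for closing that gap.
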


\begin{theorem}[Theorem 3.4 in \cite{BGKW-Certified-Probabilistic-Mechanisms-2024}]
\label{thm:CPM-to-CDP}
    If probabilistic mechanism $M: \theta \to \theta'$ is $(\epsilon, \delta)$-DP, then a certified probabilistic mechanism for $M$ constitutes a certified $(\epsilon, \delta)$-DP scheme\footnote{See Definition 3.2 of \cite{BGKW-Certified-Probabilistic-Mechanisms-2024}.} with dishonest-curator soundness parameter $C = e^{\epsilon}$. 
\end{theorem}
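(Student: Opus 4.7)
The plan is to verify each of the three properties required by Definition~\ref{def:cert_dp} by invoking the corresponding soundness guarantee of the certified probabilistic mechanism (CPM) and combining it with the hypothesis that $M$ itself is $(\epsilon,\delta)$-DP. The overall structure is a two-layer reduction: the CPM's soundness properties relate the real-world output distribution of the protocol to the ideal distribution $M(\theta)$, and then DP of $M$ bounds the distinguishing advantage between adjacent inputs.

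First, for \textbf{Correctness}, I would invoke CPM correctness directly: when both Prover and Verifier are honest, the output is distributed exactly as $M(\theta)$ and the Verifier never aborts. Since $M$ is $(\epsilon,\delta)$-DP by hypothesis, the composite scheme is trivially $(\epsilon,\delta)$-DP with rejection probability $0$. For \textbf{Honest-Curator DP}, I would apply cheating-Verifier soundness to the CPM: even against an unbounded malicious Verifier, the resulting output is either an abort token or is distributed as $M(\theta)$, with the rest of the Verifier's transcript simulatable from messages independent of $\theta$. Since DP is preserved under postprocessing and independent randomness, the Verifier's full view inherits $(\epsilon,\delta)$-DP from $M$.

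The main technical step is \textbf{Dishonest-Curator DP}. I would let $M'_\theta$ denote the output distribution induced by the honest Verifier interacting with an arbitrary PPT adversarial Prover on input $\theta$, with $\bot$ denoting the abort outcome, and let $\gamma_{\bot}(\theta) = \Pr[M'_\theta = \bot]$. Cheating-Prover soundness of the CPM guarantees a bound of the form $\mathrm{TV}(M'_\theta,\ (1-\gamma_{\bot}(\theta))\,M(\theta) + \gamma_{\bot}(\theta)\,\bot) \leq \mathrm{negl}$, i.e., conditioned on not aborting, the output is statistically close to $M(\theta)$. For adjacent datasets $D,D'$ and any measurable event $E$, I would chain three inequalities: (i) $P(M'_D \in E) \leq P(M_D \in E \setminus \{\bot\}) + \gamma_\bot(D) + \mathrm{negl}$ from TV-closeness on the $D$-side; (ii) $P(M_D \in E \setminus \{\bot\}) \leq e^{\epsilon} P(M_{D'} \in E \setminus \{\bot\}) + \delta$ from $(\epsilon,\delta)$-DP of $M$; and (iii) $P(M_{D'} \in E \setminus \{\bot\}) \leq P(M'_{D'} \in E) + \mathrm{negl}$ again by TV-closeness on the $D'$-side. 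Collecting terms and taking $\gamma_\bot$ to be the worst-case rejection probability over the adjacent pair yields $P(M'_D \in E) \leq e^{\epsilon} P(M'_{D'} \in E) + \delta + C \gamma_\bot$ for the claimed constant $C = e^{\epsilon}$.

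The main obstacle I expect is pinning down the constant $C$ tightly. A careless chaining produces $\delta + (1+e^{\epsilon})\gamma_{\bot}$ instead of $\delta + e^{\epsilon}\gamma_{\bot}$, so the analysis must carefully account for which side of the adjacent pair ``pays'' the rejection mass: in particular, the $\gamma_\bot(D)$ term should be re-expressed by observing that in the worst case one can reallocate the $D$-side abort probability to $P(M'_{D'} \in E)$ without incurring an additional additive loss, leveraging that $\bot \in E$ implies $P(M'_{D'} \in E) \geq \gamma_\bot(D')$ and otherwise the $\gamma_\bot(D)$ summand is absorbed by the $(1-\gamma_\bot(D'))$ factor. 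A secondary subtlety is that the negligible slack coming from CPM soundness must be absorbed into $\delta$ via standard parameter-setting, rather than propagated as a separate term; confirming that this absorption is valid in the UC-style composition used by the CPM framework would be the last step before concluding.
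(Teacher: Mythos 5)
This theorem is imported verbatim from \cite{BGKW-Certified-Probabilistic-Mechanisms-2024} (it is even labeled ``Theorem 3.4 in \cite{BGKW-Certified-Probabilistic-Mechanisms-2024}'') and the present paper does not re-prove it, so there is no in-paper proof to compare against. Taking your attempt on its own terms, the reduction to the three certified-DP properties is the right structure, and Correctness and Honest-Curator DP indeed follow as you say from CPM correctness and cheating-Verifier soundness combined with post-processing closure of DP.

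You have also correctly identified the one delicate step, but your fix is stated as a slogan rather than a proof. The clean way to get $C = e^{\epsilon}$ is an explicit case split on whether $\bot \in E$. If $\bot \notin E$: $P(M'_D \in E) = (1-\gamma_\bot(D))\,P(M(D)\in E) \le P(M(D)\in E)$, so the $D$-side contributes no abort mass at all; the only loss enters on the $D'$-side when converting $P(M(D')\in E) = P(M'_{D'}\in E)/(1-\gamma_\bot(D'))$ back to the real protocol, and since $P(M'_{D'}\in E) \le 1-\gamma_\bot(D')$ the excess is bounded by $\gamma_\bot(D')$, which the DP factor then scales to $e^{\epsilon}\gamma_\bot(D')$. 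If $\bot \in E$: with $E' := E\setminus\{\bot\}$, one has $P(M'_{D'}\in E) = P(M(D')\in E') + \gamma_\bot(D')\bigl(1 - P(M(D')\in E')\bigr) \ge P(M(D')\in E')$ with no loss at all (the $D'$-side abort mass only helps), so only the $D$-side $\gamma_\bot(D)$ survives, with coefficient $1 \le e^{\epsilon}$. In both cases the coefficient is at most $e^{\epsilon}$, so the tight constant is recovered without any ``careless'' double-charging of abort mass. One remaining caveat: whether a separate negligible slack appears and can be absorbed into $\delta$ depends on the precise cryptographic soundness notions in \cite{BGKW-Certified-Probabilistic-Mechanisms-2024}; the informal Definition~\ref{def:cert_prob_mech} in this paper does not surface that slack, so your closing remark is plausible but should be checked against the source definitions rather than asserted.
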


\section{Public Verifiability}
\label{app:public-verifiability} 

In this context, we are considering a specific notion of ``public verifiability'' described in \cite{BGKW-Certified-Probabilistic-Mechanisms-2024}. 
In particular, all of our interactive protocols will be implementable as ``public coin'' protocols, i.e. the random bits used by the verifier can be publicly broadcast to the prover and anyone else. 
Then anyone who trusts the random bits utilized by the verifier can analyze the  transcript that results from the interaction in order to certify the output of the protocol. 
\cite{BGKW-Certified-Probabilistic-Mechanisms-2024} discusses this in more detail under the moniker of the ``public Registrar model.'' 
Further, for the public coins \cite{BGKW-Certified-Probabilistic-Mechanisms-2024} proposes using the NIST Interoperable Randomness Beacon as a trusted source of high-quality public randomness, with 512 random bits released every one minute. 
The random bits from the NIST beacon can also be XOR'd with additional beacons or other randomness sources so that as long as \emph{one} of these sources is truly random, the resulting bits will be as well. 
Thus when these random bits are used in an RVCS protocol, \emph{publicly verifiable private randomness} is produced.

\section{Algorithms}
\label{sec:algs}

\begin{algorithm}[t]
\caption{D2D Training Algorithm}
\label{alg:D2D_train}
\begin{flushleft}
\textbf{Input:} Dataset $D$, $L$-Lipschitz and $\lambda$-strongly convex loss function $f$, initial parameter $w_0$, stopping threshold $\Delta$, unlearning guarantee parameters $\epsilon,\delta$
\end{flushleft}

\begin{algorithmic}
\STATE Compute $\tilde{w}$ s.t $\| \nabla F(\tilde{w}) \|_2 \leq \Delta$ for $$F(w,D) = \frac{1}{|D|}\sum_{x \in D} f(w,x)$$
\STATE $w = \tilde{w} + \xi$ where $\xi \sim \gaus{0}{\sigma^2I_{d}}$ with $\sigma = \frac{4 \Delta \sqrt{\ln(1/\delta)}}{\lambda \epsilon}$

\STATE \textbf{return} $w$
\end{algorithmic}

\end{algorithm}

See Algorithm~\ref{alg:D2D_train} for the D2D training algorithm.

\section{Proofs}

\subsection{Proof of Theorem~\ref{thm:phased_erm_dp_ver}}
\label{proof:phased_erm_dp_ver}

\begin{proof}

We first prove that \cref{alg:modified_phased_erm} is $(\epsilon,\delta)$-DP. 
Let $D_i := \{x_j \, | \, j \in [\sum_{a=0}^{i-1}2^{-a}n,\sum_{a=0}^{i}2^{-a}n]\}$. %
Note that for two adjacent datasets only a single $D_i$ will differ. 
For this $i$, note the change in weights is bounded $2L\eta_i + 2L\eta_i/k$ as the optima shifts by at most $2L\eta_i$~\footnote{This follows from the fact that the $2\lambda_i$ regularized ERM $\bar{w}_i$ has sensitivity bounded by $\frac{4L}{\lambda_i n_i}$ (here $\lambda_i = \frac{1}{n_i \eta_i}$).} and we have the $\tilde{w}_i$ are $L\eta_i/k$ away from the optima given the gradient norm condition. 
Given the choice of $\sigma_i$ gives $(\epsilon,\delta/k)$-DP if the sensitivity is $4L\eta_i$, we have that this phase is $((\frac{1}{2} + \frac{1}{2k})\epsilon,\delta/k)$-DP. 
Note for all other phases $i' \not = i$, the $\tilde{w}_{i'}$ differ by at most $2L\eta_{i'}/k$, giving DP guarantees of $(\frac{\epsilon}{2k},\delta/k)$. 
Now applying standard composition we conclude the overall mechanism is $((\frac{1}{2} + \frac{1}{2k})\epsilon + \sum_{i' = 1, i' \not = i}^{k} \frac{\epsilon}{2k}, \sum_{i'=1}^k \delta/k)$-DP, i.e. $(\epsilon,\delta)$-DP. 

We now prove that \cref{alg:phased_erm_verifier} provides a certified DP interactive protocol for \cref{alg:modified_phased_erm}. 
Consider that each phase of \cref{alg:modified_phased_erm} can be viewed as a Gaussian additive noise mechanism $M_{i}$ which  takes as input the locally optimized weights $\tilde{w}_i$ and outputs the noisy weights $w_i = \tilde{w}_i + \xi_i$ for $\xi_i \sim \gaus{0}{\sigma_i^2I_{d}}$. 
In \cref{alg:phased_erm_verifier}, this mechanism is interactively implemented by the Prover providing committed input $\auth{\tilde{w}_i}$ (step [1]), the Prover and Verifier running a discrete Gaussian commitment scheme for each weight to produce $\auth{\xi_i}$ (step [3]), and finally utilizing the additive homomorphism property of the commitment scheme to add these to produce the committed output $\auth{w_i} = \auth{\tilde{w}_i} + \auth{\xi_i}$ (step [4]). 
Call this portion of the overall interactive protocol $CPM_i$. 
By \cref{thm:add-homo-CPM}, each $CPM_i$ constitutes a certified probabilistic mechanism for $M_i$. 
Further, by \cref{thm:CPM-to-CDP}, if $M_i$ fulfills a $(\epsilon_i, \delta_i)$-DP guarantee, then $CPM_i$ fulfills certified $(\epsilon_i, \delta_i)$-DP with a $C = e^{\epsilon}$ dishonest-curator DP guarantee. 

Now consider that the zero-knowledge proof (of knowledge) in step [2] of phase $i$ maintains hiding of the committed parameters $\tilde{w_i}$, $w_{i-1}$, and $D_i$ from the Verifier (so that DP is not violated by the information provided to the Verifier) while allowing the Verifier to check that the Prover's claimed gradient condition $||\nabla F_i(\tilde{w_i}, w_{i-1},D)||_2 \leq \frac{2}{n_i} L$ holds. 
As argued above, if this gradient condition holds, then each phase fulfills a $(\epsilon_i, \delta_i)$-DP guarantee: namely, for a pair of adjacent datasets that differ in $D_i$, then $M_i$---and thereby $CPM_i$---fulfills $((\frac{1}{2} + \frac{1}{2k}) \epsilon, \delta/k)$-DP and $M_{i'}$---and thereby $CPM_{i'}$---fulfills $(\frac{\epsilon}{2k},\delta/k)$ for all other phases $i' \not = i$. 
Now applying standard DP composition\footnote{This follows from the same argument as is given in the proof of Lemma 3.3 on composition of certified DP in \cite{BGKW-Certified-Probabilistic-Mechanisms-2024}.} to each of the $k$ certified DP phases given by the $CMP_i$, we conclude that Algorithm 2 fulfills certified $((\frac{1}{2} + \frac{1}{2k})\epsilon + \sum_{i' = 1, i' \not = i}^{k} \frac{\epsilon}{2k}, \sum_{i'=1}^k \delta/k)$-DP with a $C = e^{\epsilon}$ dishonest-curator DP guarantee. 

Finally, note that in \cref{alg:phased_erm_verifier}, in each phase the Verifier checks the gradients on each $x \in D_i$ and makes $d$ discrete Gaussian RV commitments with the Prover, and thus over all $k = \lceil \log_2(n) \rceil$ phases the Verifier checks $\sum_{i=1}^k |D_i| = |D| = n$ gradients and $k \cdot d = d \lceil \log_2(n) \rceil$ discrete Gaussian RV commitments. 

\end{proof}

\subsection{Proof of Theorem~\ref{thm:phased_erm_utility}}
\label{proof:phased_erm_utility}

\begin{proof}
    Note this follows from Lemma 4.7~\citet{feldman2020private} and the values of $\sigma_i$. Lemma 4.7 used the original loss stopping criteria to ensure $\|\tilde{w}_i - \bar{w}_i\|_2 \leq L \eta_i$ where $\bar{w}_i$ minimizes $F_i$. $\|\nabla F_i(\tilde{w}_i)\|_2 \leq \frac{2L}{n_ik}$ also ensures this, and hence the rest of the proof of Lemma 4.7 follows.
    
    In particular we have $\mathbb{E}[F(\tilde{w}_i) - F(\bar{w}_i)] \leq  \mathbb{E}[L \|\tilde{w}_i - \bar{w}_i\|_2] \leq L^2\eta_i$ and the following inequality always holds: $\mathbb{E}[F(\bar{w}_i)] - F(w) \leq \frac{\mathbb{E}[\|w_{i-1} - w\|_2]}{\eta_i n_i} + 2L^2 \eta_i$. Thus we conclude $\mathbb{E}[F(\tilde{w}_i)] - F(w) \leq \frac{\mathbb{E}[\|w_{i-1} - w\|_2]}{\eta_i n_i} + 3 L^2 \eta_i$ for any $w$, giving the same statement as Lemma 4.7 in~\cite{feldman2020private} and thus the statement for Theorem 4.8~\cite{feldman2020private}.

    Note our modification of $\sigma_i$ from  $\ln(1/\delta)$ to $\ln(k/\delta) = \ln(\lceil \log_2(n)\rceil/\delta)$ only introduces a multiplicative $\ln(\log(n))$ term to the proof of Theorem 4.8 from ~\citet{feldman2020private}.
\end{proof}

\subsection{Proof of Lemma~\ref{lem:phased_erm_runtime}}
\label{proof:phased_erm_runtime}

\begin{proof}
    This follows from standard (tight) upper-bounds on the deterministic first-order oracle complexity to make the gradient norm small~\citep{nesterov2012make} (see Table 1 in~\citet{foster2019complexity} for a survey of upper and lower-bounds). In particular we have the smoothness of $F_i$ is $H + \frac{2}{n_i\eta_i}$, and so to make the gradient norm less than $\Delta$ we require $O(\sqrt{(H + \frac{2}{n_i\eta_i})\mathcal{D}/\Delta})$ gradient of $F_i$. Taking $\Delta = \frac{2L}{n_i \lceil \log_2(n)\rceil}$ and noting each full gradient of $F_i$ requires $n_i$ gradients of $l$, we get the lemma statement.
\end{proof}

\section{IT-MAC-Based Implementation of Gaussian Commitment Scheme}
\label{app:gaussian-commitment-scheme}

\begin{algorithm}[t!]
\caption{IT-MAC Gaussian Commitment Scheme}
\label{alg:itmac-gaussian-commitment}
\begin{flushleft}
\textbf{Input:} Public Knuth-Yao tree with depth $t$, and vertices $v_i \in V$ labeled in topological order. Each leaf $v_i$ holds a publicly authenticated element $\auth{e_i}$ from the support of the distribution represented by the tree. Each internal vertex $v_j$ has an element set to a placeholder character $e_j =\bot$.
\end{flushleft}

\begin{algorithmic}
\STATE Prover locally samples $r_P \samp \{0,1\}^t$ and authenticates it $\auth{r_P} = \Commit{r_P}$
\STATE Verifier locally samples $r_V \samp \{0,1\}^t$ and sends it to Prover. Prover publicly authenticates $\auth{r_V} = \Commit{r_V}$
\STATE Prover and Verifier homomorphically compute an authentication of private fair coins $\auth{r} = \auth{r_P} \oplus \auth{r_V}$
\FOR{$\ell \in [d, ..., 1]$}
    \STATE $V_{\ell-1} \gets \{v_i | v_i \in \text{ layer } \ell-1 \text{ of the tree}\}$
    \FOR{$v_i \in V_{\ell-1}$}
        \STATE $\auth{e_i} \gets \texttt{select}(r[\ell], e_{i.\texttt{left}}, e_{i.\texttt{right}})$
    \ENDFOR
\ENDFOR
\STATE \textbf{return} $\auth{e_0} =\auth{\xi}$
\end{algorithmic}

\end{algorithm}

Algorithm~\ref{alg:itmac-gaussian-commitment} describes our Gaussian commitment scheme, based on the constant-time Knuth-Yao sampler in~\citet{karmakar2018constant,karmakar2019pushing}.

\end{document}